\DeclareFontShape{OT1}{cmss}{b}{n}{<->ssub * cmss/bx/n}{} 
\definecolor{codegreen}{rgb}{0,0.6,0}
\definecolor{codegray}{rgb}{0.5,0.5,0.5}
\definecolor{codepurple}{rgb}{0.58,0,0.82}
\definecolor{backcolour}{rgb}{0.92,0.92,0.92}
\newcommand{\stderr}[1]{\scriptsize$\pm$#1}
\definecolor{Blue}{RGB}{55,126,184}
\definecolor{Orange}{RGB}{217,95,2}
\definecolor{Teal}{RGB}{27,158,119}
\definecolor{Red}{RGB}{228,26,28}
\definecolor{Purple}{RGB}{117,112,179}
\lstdefinestyle{mystyle}{
    backgroundcolor=\color{backcolour},   
    commentstyle=\color{codegreen},
    keywordstyle=\color{magenta},
    stringstyle=\color{codepurple},
    basicstyle=\ttfamily\footnotesize,
    breakatwhitespace=false,         
    breaklines=true,                 
    captionpos=b,                    
    keepspaces=true,                 
    showspaces=false,                
    showstringspaces=false,
    showtabs=false,                  
    tabsize=2
}
\renewcommand\thesubfigure{(\alph{subfigure})}
\newcommand\newsubcap[1]{\phantomcaption%
  \caption*{\figurename~\thefigure\thesubfigure: #1}}
\pgfplotsset{compat=1.17}
\pgfplotsset{cycle list/Set1-4} 
\theoremstyle{plain}
\newtheorem{theorem}{Theorem}[section]
\newtheorem{lemma}[theorem]{Lemma}
\theoremstyle{definition}
\newtheorem{assumption}[theorem]{Assumption}
\theoremstyle{remark}
\newcommand{\footnoteremember}[2]{%
    \footnote{#2}%
    \newcounter{#1}
    \setcounter{#1}{\value{footnote}}
}
\newcommand{\footnoterecall}[1]{\footnotemark[\value{#1}]}
\def\blfootnote{\gdef\@thefnmark{}\@footnotetext}
\DeclareMathOperator*{\argmin}{arg\,min}
\newcommand{\minimize}{\mathop{\mbox{minimize}{}}}
\newcommand{\E}{\mathbf{E}}
\newcommand{\prox}{\mathbf{prox}}
\newcommand{\R}{\mathbf{R}}
\newcommand{\sgn}{\mathrm{sgn}}
\newcommand{\cW}{\mathcal{W}}
\newcommand{\dist}{\mbox{dist}}
\newcommand{\bftab}{\fontseries{b}\selectfont}
\newlength{\fullwidth}
\title{PARQ: Piecewise-Affine Regularized Quantization$^\S$}
\author{
Lisa Jin\footnoteremember{FAIR}{Meta FAIR, United States.
Emails: \{\texttt{lvj, gromovand, adefazio, linx}\}\texttt{@meta.com}.}
\qquad
Jianhao Ma\footnote{
University of Michigan, Ann Arbor, MI, United States. Email: \texttt{jianhao@umich.edu}.}
\qquad
Zechun Liu\footnote{Meta Reality Labs, United States. Email: \texttt{zechunliu@meta.com}} \\[1ex]
Andrey Gromov\footnoterecall{FAIR}
\qquad
Aaron Defazio\footnoterecall{FAIR}
\qquad 
Lin Xiao\footnoterecall{FAIR}
}
\date{}
\begin{document}
\maketitle


\begin{abstract}
We develop a principled method for quantization-aware training (QAT) of large-scale machine learning models.
Specifically, we show that \emph{convex}, piecewise-affine regularization (PAR) can effectively induce the model parameters to cluster towards discrete values. 
We minimize PAR-regularized loss functions using an aggregate proximal stochastic gradient method (AProx) and prove that it has \emph{last-iterate convergence}. 
Our approach provides an interpretation of the straight-through estimator (STE), a widely used heuristic for QAT, as the asymptotic form of PARQ.
We conduct experiments to demonstrate that PARQ obtains competitive performance on convolution- and transformer-based vision tasks.
\end{abstract}

\blfootnote{\!\!\!$^\S$Open-source PyTorch package:
\url{https://github.com/facebookresearch/parq}}

\section{Introduction}

Modern deep learning models exhibit exceptional vision and language processing capabilities, but come with excessive sizes and demands on memory and computing. 
Quantization is an effective approach for model compression, which can significantly reduce their memory footprint, computing cost, as well as latency for inference \citep[e.g.,][]{han2016compression,sze2017efficient}. 
There are two main classes of quantization methods: post-training quantization (PTQ) and quantization-aware training (QAT).
Both are widely adopted and receive extensive research---see the recent survey papers \citep{gholami2022survey,fournarakis2022quantizing} and references therein.

PTQ converts the weights of a pre-trained model directly into lower precision without repeating the training pipeline;
it thus has less overhead and is relatively easy to apply \cite{nagel2020up,cai2020zeroq,chee2024quip}.
However, it is mainly limited to 4 or more bit regimes and can suffer steep performance drops with fewer bits
\cite{yao2022zeroquant,dettmers2023case}. 
This is especially the case for transformer-based models, which prove harder to quantize \cite{bai2021binarybert,qin2022bibert} compared to convolutional architectures \cite{martinez2019training,qin2020forward}. 
On the other hand, QAT integrates quantization into pre-training and/or fine-tuning processes and can produce low-bit (especially binary) models with mild performance degradation \citep[e.g.][]{fan2021training,liu2022bit}. 

A key ingredient of QAT is the so-called straight-through estimator (STE), which was invented as a heuristic \cite{bengio2013ste, courbariaux2015binaryconnect} and has been extremely successful in practice \cite{rastegari2016xnor,hubara2018quantized,esser2019learned}. 
There have been many efforts trying to demystify the effectiveness of STE, especially through the lens of optimization algorithms
\citep[e.g.,][]{li2017deeperunderstanding, yin2018binaryrelax, yin2019understandingste, bai2019proxquant, ajanthan2021mirror, dockhorn2021demystifying, LuYuLiNia2023}.
However, significant gaps between theory and practice remain. 

In this paper, we develop a principled method for QAT based on \emph{convex} regularization and interpret STE as the asymptotic form of an aggregate proximal (stochastic) gradient method. 
The convex regularization framework admits stronger convergence guarantees than previous work and allows us to prove the \emph{last-iterate convergence} of the method.

\subsection{The Straight-Through Estimator (STE)}
\label{sec:ste}

\begin{figure}[t]
\begin{center}
\includegraphics[width=0.35\linewidth]{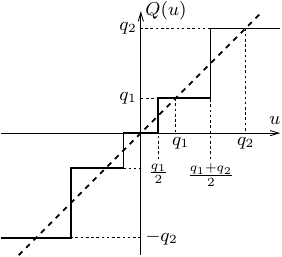}
\end{center}
\caption{A quantization map with $\mathcal{Q}=\{0,\pm q_1, \pm q_2\}$.}
\label{fig:quant-map}
\end{figure}

We consider training a machine learning model with parameters $w\in\R^d$ and let $f(w,z)$ denote the loss of the model on a training example~$z$.
Our goal is to minimize the population loss 
\(f(w) = \E_{z}[f(w,z)]\)
where~$z$ follows some unknown probability distribution.
Here we focus on the classical stochastic gradient descent (SGD) method. 
During each iteration of SGD, we draw a random training example (mini-batch) $z^t$ and update the model parameter as
\begin{equation}\label{eqn:sgd}
w^{t+1} = w^t - \eta_t \nabla f(w^t, z^t),
\end{equation}
where $\nabla f(\cdot,z^t)$ denotes the stochastic gradient with respect to the first argument (here being $w^t$) and $\eta_t$ is the step size.

QAT methods modify SGD by adding a quantization step.
In particular, the BinaryConnect method \cite{courbariaux2015binaryconnect} can be written as 
\begin{equation}\label{eqn:binary-connect}
u^{t+1} = u^t - \eta_t \nabla f(Q(u^t), z^t),
\end{equation}
where $Q(\cdot)$ is the (coordinate-wise) projection onto the set $\{\pm 1\}^d$. It readily generalizes to projection onto $\mathcal{Q}^d$ where $\mathcal{Q}$ is a finite set of arbitrary quantization values.
Figure~\ref{fig:quant-map} shows an example with $\mathcal{Q}=\{0,\pm q_1, \pm q_2\}$.

Notice that in Equation~\eqref{eqn:binary-connect} we switched notation from $w^t$ to $u^t$, because we would like to define $w^t = Q(u^t)$ as the quantized model parameters.
This reveals a key feature of QAT: the stochastic gradient in~\eqref{eqn:binary-connect} is computed at~$w^t$ instead of~$u^t$ itself
(which would be equivalent to~\eqref{eqn:sgd}).
Here we regard~$u^t$ as a full-precision (floating-point) latent variable that is used to accumulate the gradient computed at~$w^t$, and the quantization map $Q(\cdot)$ is applied to the latent variable $u^{t+1}$ to generate the next quantized variable $w^{t+1}$.

The notion of STE rises from the intent of computing an approximate gradient of the loss function with respect to~$u^t$.
Let's define the function $\tilde{f}(u,z):=f(Q(u),z)=f(w,z)$ in light of $w=Q(u)$. Then we have for each $i=1,\ldots,d$,
\[
\frac{\partial \tilde{f}}{\partial u_i} 
= \frac{\partial f}{\partial w_i}\frac{d w_i}{d u_i}
= \frac{\partial f}{\partial w_i}\frac{d Q(u_i)}{d u_i}.
\]
However, due to the staircase shape of the quantization map, we have $d Q(u_i)/d u_i=0$ and thus $\nabla \tilde{f}(u,z)=0$ almost everywhere.
STE aims to ``construct'' a nontrivial gradient with respect to~$u$, by simply treating $Q(\cdot)$ as the identity map (``straight-through'') during backpropagation, i.e., replacing $d Q(u_i)/d u_i$ with~$1$. 
This leads to the approximation
\[
\nabla \tilde{f}(u,z)~\overset{\text{STE}}{\approx}~\nabla f(w,z) = \nabla f(Q(u),z),
\]
so one can interpret Equation~\eqref{eqn:binary-connect} as an approximate SGD update for minimizing the loss $\tilde{f}(u)$.

There are several issues with the above argument. 
First, we know exactly that $d Q(u_i)/d u_i=0$ almost everywhere, so there is no need for ``approximation.''
Second, any approximation that replaces~$0$ with $1$ in this context warrants scrutiny of the resulting bias and consequences on stability.
Existing works on this are restricted to special cases and weak convergence results \citep{li2017deeperunderstanding, yin2019understandingste}.

Alternatively, we can view~\eqref{eqn:binary-connect} as an implicit algorithm for updating $w^t$ and analyze its convergence. More explicitly
\begin{equation}\label{eqn:qat}
\begin{aligned}
u^{t+1} &= u^t \!- \eta_t\, \nabla f(w^t, z^t),\\
w^{t+1} &= Q(u^{t+1}) .
\end{aligned}
\end{equation}
Here $u^t$ serves as an auxiliary variable that accumulates past gradients evaluated at $w^0,\ldots,w^t$ (similar to momentum).  
This formalism is enabled through regularization and proximal gradient methods \citep{bai2019proxquant,dockhorn2021demystifying}.
And it is the path we take in this paper.

\subsection{Outline and contributions}

In Section~\ref{sec:par}, we review the framework of regularization and introduce a family of \emph{convex}, piecewise-affine regularizers (PAR).
In addition, we derive the first-order optimality conditions for minimizing PAR-regularized functions.

In Section~\ref{sec:aprox}, we derive an aggregate proximal gradient method (AProx) for solving PAR-regularized minimization problems
 and provide its convergence analysis for convex losses. AProx applies a soft-quantization map that evolves over the iterations and asymptotically converges to hard quantization, thus giving a principled interpretation of STE.

In Section~\ref{sec:implementation}, we present PARQ
(Piecewise-Affine Regularized Quantization), 
a practical implementation of AProx  
that does not need to pre-determine the quantization values and regularization strength. 

In Section~\ref{sec:experiments}, we conduct QAT experiments on low-bit quantization of convolution- and transformer-based vision models and demonstrate that PARQ obtains competitive performance compared with STE/BinaryConnect and other methods based on nonconvex regularization.

We note that \citet{dockhorn2021demystifying}
used the regularization framework and proximal optimization to demystify the BinaryConnect algorithm~\eqref{eqn:qat} and developed a generalization called ProxConnect.
In fact, AProx is equivalent to ProxConnect albeit following quite different derivations. 
Nevertheless, we make the following novel contributions:
\vspace{-1ex}
\begin{itemize}\itemsep 0pt
\item  
We propose \emph{convex} PAR for inducing quantization.
\citet{dockhorn2021demystifying} 
focus on monotone (non-decreasing) proximal maps, which can correspond to arbitrary regularization.
Even though they present convergence results for convex regularization, no such example is given to demonstrate its relevance.
Beyond closing this gap between theory and practice, our construction of convex PAR is rather surprising --- it actually encourages clustering around discrete values.  
\item  
We derive first-order optimality conditions for minimizing PAR-regularized functions. They reveal the \emph{critical role of nonsmoothness in inducing quantization}.
\item We prove \emph{last-iterate convergence} of AProx. 
The convergence results of 
\citet{dockhorn2021demystifying} 
concern the averaged iterates generated by ProxConnect/AProx.
While such results are conventional in the stochastic optimization literature, they are far from satisfactory for QAT, because the averaged iterate may not be quantized even if every iterate is quantized. 
Last-iterate convergence gives a much stronger guarantee.
\item We propose a practical implementation called PARQ that can adaptively choose the quantization values and regularization strength in an online fashion. 
\end{itemize}

\section{Piecewise affine regularization (PAR)}
\label{sec:par}

\begin{figure}[t]
\begin{subfigure}[b]{0.5\linewidth}
\centering
\includegraphics[width=0.4\linewidth]{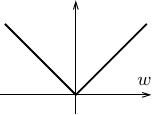}
\caption{$\Psi(w)=\|w\|_1$\qquad\mbox{}}
\label{fig:reg-L1}
\end{subfigure}%
\begin{subfigure}[b]{0.5\linewidth}
\centering
\includegraphics[width=0.6\linewidth]{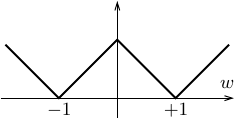}
\caption{$\displaystyle\Psi(w)=\min_{v\in\{\pm 1\}}\|w-v\|_1$\\[-1.5ex]}
\label{fig:reg-W}
\end{subfigure}%
\caption{Illustration of two nonsmooth regularizers.}
\label{fig:reg}
\end{figure}

Regularization is a common approach for inducing desired properties of machine learning models, by minimizing a weighted sum of the loss function~$f$ and a regularizer~$\Psi$: 
\begin{equation}\label{eqn:erm+reg}
\minimize_{w\in\R^d} ~~ f(w) + \lambda\Psi(w),
\end{equation}
where $\lambda\in\R_+$ is a parameter to balance the relative strength of regularization. 
It is well known that $L_2$-regularization helps generalization by preferring smaller model parameters, and $L_1$-regularization (Figure~\ref{fig:reg-L1}) induces sparsity.

There have been many attempts of using regularization to induce quantization \citep[e.g.,][]{carreiraperpinan2017compression2, yin2018binaryrelax, bai2019proxquant}.
An obvious choice is to let~$\Psi$ be the indicator function of~$\mathcal{Q}^d$; in other words, $\Psi(w)=\sum_{i=1}^d\delta_{\mathcal{Q}}(w_i)$ where
\begin{equation}\label{eqn:indicator}
\delta_{\mathcal{Q}}(w_i) = \begin{cases}
0 & \mbox{if}~w_i\in\mathcal{Q}, \\ 
+\infty & \mbox{otherwise}.
\end{cases}
\end{equation}
Then minimizing $f(w)+\lambda\Psi(w)$ is equivalent to the constrained optimization problem of minimizing $f(w)$ subject to $w\in\mathcal{Q}^d$, which is combinatorial in nature and very hard to solve in general. 
\citet{yin2018binaryrelax} propose to use the Moreau envelope of the indicator function, which under the Euclidean metric gives 
$\Psi(w)=\min_{v\in\mathcal{Q}^d}\|v-w\|_2^2$. 
A nonsmooth version is proposed by \citet{bai2019proxquant} under the $L_1$-metric, resulting in $\Psi(w)=\min_{v\in\mathcal{Q}^d}\|v-w\|_1$; Figure~\ref{fig:reg-W} shows a W-shaped example in one dimension. 

\emph{The effectiveness of a regularizer largely relies on two properties: nonsmoothness and convexity.}
Smooth regularizers such as $\dist(w,\mathcal{Q}^d)$ behave like $\|w\|_2^2$ locally, and do not induce zero or any discrete structure like hard quantization.
Nonsmooth regularizers locally behave like $\|w\|_1$ near zero; they thus tend to cluster weights towards the set of nondifferentiable points---more suitable for quantization. 

Convexity concerns the global behavior of regularization.
For example, the popularity of $L_1$-regularization for sparse optimization is largely attributed to its convexity besides being nonsmooth. 
On the other hand, it is hard for a gradient-based algorithm to cross the middle hill in the nonconvex W-shaped regularizer shown in Figure~\ref{fig:reg-W}, if the initial weights are trapped in the wrong valley from the optimal ones.
Therefore, ideally we would like to construct a regularizer that is both nonsmooth and convex.

To simplify presentation, we assume 
\(\Psi(w)=\sum_{i=1}^d \Psi(w_i)\)
and use the same notation $\Psi$ for the function of a vector or one of its coordinates (it should be self-evident from the context).  
For most of the discussion, we focus on the scalar case and omit the subscript~$i$ or simply assume $d=1$. 

\begin{figure}[t]
\centering
\includegraphics[width=0.4\linewidth]{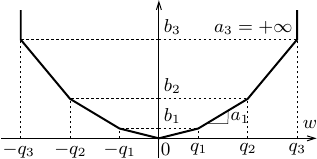}
\vspace{1ex}
\caption{Convex PAR: \(\Psi(w)=\max_{k}\{a_k(|w|-q_k)+b_k\}\).}
\label{fig:par}
\end{figure}

Suppose the set of target quantization values is given as
\(
\mathcal{Q} = \{0, \pm q_1,\ldots,\pm q_m\}\) 
and assume $0=q_0<q_1<\cdots<q_m$.
We define a piecewise-affine regularizer (PAR) as
\begin{equation}\label{eqn:par-def}
\Psi(w) = \max_{k\in\{0,\ldots,m\}} \{ a_k(|w|-q_k) + b_k \}, 
\end{equation}
where the slopes $\{a_k\}_{k=0}^m$ are free parameters that satisfy
$0 \leq a_0 < a_1 < \cdots < a_m = +\infty$, 
and $\{b_k\}_{k=0}^m$ are determined 
by setting $b_0=0$, $q_0=0$, and 
\[
b_k = b_{k-1} + a_{k-1}(q_k-q_{k-1}), \qquad k=1,\ldots,m.
\]
As shown in Figure~\ref{fig:par}, $(\pm q_k,b_k)$ are the reflection points of the piecewise-affine graph. 
The function $\Psi(w)$ is convex because the maximum of finite linear functions is convex 
\citep[Section~3.2.3]{boyd2004convex}.

We note that setting $a_0=0$ effectively removes $q_0=0$ from the quantization set~$\mathcal{Q}$ because it is no longer a reflection point of~$\Psi$.
Figure~\ref{fig:special-cases} illustrates three special cases of PAR for low-bit quantization, where both Figures~\ref{fig:special-1-bit} and~\ref{fig:special-2-bit} have $a_0=0$.
Finally, the above definition of PAR is symmetric around zero for the convenience of presentation. 
It is straightforward to extend to the asymmetric case.

\begin{figure}[t]
\captionsetup[subfigure]{aboveskip=10pt}
\centering
\captionsetup[subfigure]{position=b}
\begin{subfigure}[b]{0.33\linewidth}
\centering
\includegraphics[width=0.75\linewidth]{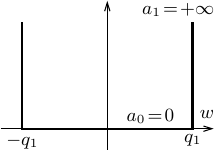}
\caption{1-bit: $\mathcal{Q}=\{\pm q_1\}$}
\label{fig:special-1-bit}
\end{subfigure}%
\begin{subfigure}[b]{0.33\linewidth}
\centering
\includegraphics[width=0.75\linewidth]{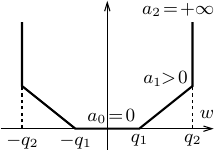}
\caption{2-bit: $\mathcal{Q}=\{\pm q_1,\pm q_2\}$}
\label{fig:special-2-bit}
\end{subfigure}
\begin{subfigure}[b]{0.33\linewidth}
\addtocounter{subfigure}{1}
\centering
\includegraphics[width=0.75\linewidth]{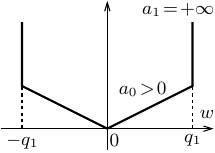}
\caption{Ternary: $\mathcal{Q}=\{0,\pm q_1\}$}
\label{fig:special-ternary}
\end{subfigure}
\caption{Three special cases of PAR for low-bit quantization.}
\label{fig:special-cases}
\end{figure}

\subsection{Optimality conditions}

In order to understand how PAR can induce quantization, we examine the optimality conditions of minimizing PAR-regularized functions.
Suppose~$f$ is differentiable and $w^\star$ is a solution to the optimization problem~\eqref{eqn:erm+reg}. 
The first-order optimality condition for this problem is
\citep[see, e.g.,][Theorem~8.18]{wright_recht2022book} 
\[
0\in\nabla f(w^\star)+\lambda\partial\Psi(w^\star),
\]
where $\partial\Psi(w^\star)$ denotes the subdifferential of $\Psi$ at $w^\star$.
For convenience, we rewrite it as
\(
\nabla f(w^\star) \in -\lambda\,\partial \Psi(w^\star),
\)
which breaks down into the following cases: 
\begin{alignat}{4}
\nonumber
w_i^\star & = -q_k, & & \Longleftarrow & \nabla_i f(w^\star) & \in \lambda\, (a_{k-1}, a_k)\\ 
\nonumber
w_i^\star & \in \!(-q_k, -q_{k-1}) ~& & \Longrightarrow ~\quad & \nabla_i f(w^\star) & = \lambda\,a_{k-1}\\ 
\nonumber
w_i^\star & = 0 & & \Longleftarrow & \!\!\!\!\!\! -\nabla_i f(w^\star) & \in \lambda\, (-a_0, a_0) \\ 
\nonumber
w_i^\star & \in (q_{k-1},q_k)\quad & & \Longrightarrow ~\quad & \nabla_i f(w^\star) & =-\lambda\,a_{k-1} \\ 
\nonumber
w_i^\star & = q_k, & & \Longleftarrow &  \nabla_i f(w^\star) & \in \lambda\, (-a_k, -a_{k-1}). 
\end{alignat}
Here the subscript~$i$ runs from~$1$ through~$d$ and~$k$ runs from~$1$ through~$m$.
The symbol  $\Longleftarrow$ ($\Longrightarrow$) means that the left-hand side expression is a necessary (sufficient) condition for the right-hand side expression.

We immediately recognize that the sufficient condition for $w_i^\star=0$ is the same as for the $L_1$-regularization $\Psi(w)=\lambda\cdot a_0\|w\|_1$. 
Further examination reveals that 
for any weight not clustered at a discrete value in~$\mathcal{Q}$, i.e., if $w_i^\star\in(q_{k-1},q_k)$, the corresponding partial derivative of~$f$ must equal to one of the $2m$ discrete values $\{\pm\lambda a_{k-1}\}_{k=1}^m$. 
Conversely, almost all values of the partial derivatives of~$f$, except for these $2m$ discrete values,
can be balanced by assigning the model parameters at the $2m+1$ discrete values in~$\mathcal{Q}$.
Intuitively, this implies that the model parameters at optimality are more likely to be clustered at these discrete values. 

\subsection{Proximal mapping of PAR}

A fundamental tool for solving problem~\eqref{eqn:erm+reg} 
is the \emph{proximal map} of the regularizer~$\Psi$, defined as
\[
\textstyle
\prox_\Psi(u) = \argmin_w \left\{\Psi(w) + \frac{1}{2}\|w-u\|_2^2\right\}.
\]
See, e.g., \citet[][\S8.6]{wright_recht2022book} for further details.
For the PAR function defined in~\eqref{eqn:par-def}, its proximal map has the following closed-form solution (let $a_{-1}=0$)
\begin{equation}\label{eqn:par-prox}
\prox_\Psi(u)
\!=\! \begin{cases}
\sgn(u) q_k & \mbox{if}~|u|\!\in\! [a_{k-1}\!+\!q_k,\;a_k\!+\!q_k], \\ 
u\!-\!\sgn(u) a_k \!\! & \mbox{if}~|u|\!\in\![a_k\!+\!q_k,\;a_k\!+\!q_{k+1}].   
\end{cases}
\vspace{-1ex}
\end{equation}
where $\sgn(\cdot)$ denote the sign or signum function. 


\begin{figure}[t]
\centering
\captionbox{Graph of \(\prox_{\Psi}(u)\).
\label{fig:par-prox}}[0.5\linewidth]{
\centering
\includegraphics[width=0.85\linewidth]{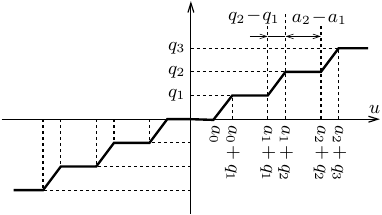}
}%
\captionbox{Graph of \(\prox_{\eta_t\lambda\Psi}(u)\).
\label{fig:par-prox-scale}}[0.5\linewidth]{
\centering
\includegraphics[width=0.85\linewidth]{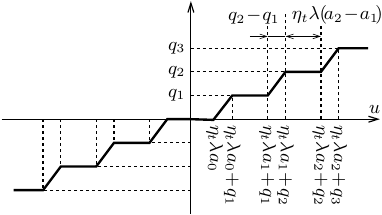}
}
\end{figure}

Figure~\ref{fig:par-prox} shows the graph of $\prox_{\Psi}(u)$, which is clearly monotone non-decreasing in~$u$.
According to \citet[][Proposition~3]{yu2015nonconvex}, a (possibly multivalued) map is a proximal map of some function if and only if it is compact-valued, monotone and has a closed graph.
For example, the hard-quantization map in Figure~\ref{fig:quant-map} is the proximal map of the (nonconvex) indicator function $\delta_\mathcal{Q}$ in~\eqref{eqn:indicator}.
\citet{dockhorn2021demystifying} work with monotone proximal maps directly without specifying the regularizer itself.
In contrast, we construct a convex regularizer, and show that (somewhat surprisingly) it can effectively induce quantization and obtain competitive performance, with stronger convergence guarantees.

\section{The Aggregate Prox (AProx) Algorithm}
\label{sec:aprox}

The regularization structure of problem~\eqref{eqn:erm+reg} can be well exploited by the \emph{proximal gradient} method 
\begin{equation}\label{eqn:prox-grad}
w^{t+1} = \prox_{\eta_t\lambda\Psi} \left(w^t - \eta_t\nabla f(w^t)\right),
\end{equation}
where $\prox_{\eta_t\lambda\Psi}$ is the proximal map of the scaled function $\eta_t\lambda\Psi$. Since $\eta_t\lambda$ effectively scales the slopes $\{a_k\}_{k=1}^m$ (with $\mathcal{Q}$ fixed), we obtain $\prox_{\eta_t\lambda\Psi}$ by simply replacing $a_k$ in~\eqref{eqn:par-prox} with $\eta_t\lambda a_k$, and 
the proximal map is shown in Figure~\ref{fig:par-prox-scale}.

If~$f$ is convex and $\nabla f$ is $L$-Lipschitz continuous, then using the constant step size $\eta_t=1/L$ leads to a convergence rate of $O(1/k)$ 
\citep[e.g.,][Theorem~9.6]{wright_recht2022book}.

In machine learning context, we have $f(w)=\E_z[f(w,z)]$ (see Section~\ref{sec:ste}). The Prox-SGD method replaces $\nabla f(w^t)$ with the stochastic gradient $g^t:=\nabla_w f(w^t,z^t)$:
\begin{equation}\label{eqn:prox-sgd}
w^{t+1} = \prox_{\eta_t\lambda\Psi} \left(w^t - \eta_t g^t \right),
\end{equation}
We assume that the step size $\eta_t$ satisfies the following classical condition to ensure convergence (with bounded $g^t$):
\begin{equation}\label{eqn:stepsize-cond}
\textstyle
\eta_t\to 0
\qquad\mbox{and}\qquad 
\sum_{t=1}^\infty \eta_t = +\infty .
\end{equation}
In this case, the flat segments on the graph of $\prox_{\eta_t\lambda\Psi}$, as shown in Figure~\ref{fig:par-prox-scale}, with lengths $\eta_t\lambda(a_k-a_{k-1})$, will all shrink to zero when $\eta_t\to 0$ (except at the two ends because $a_m=+\infty$). Therefore, the graph converges to the identity map clipped flat outside of $[-q_m,+q_m]$ and we lose the action of quantization.
This issue parallels that of using Prox-SGD with $L_1$-regularization, which does not produce sparse solutions because of the shrinking deadzone in the soft-thresholding operator as $\eta_t\to 0$ \citep{xiao2010rda}.



To overcome the problem of diminishing regularization, we derive an Aggregate Proximal gradient (AProx) method. 
Aprox shares a similar form with BinaryConnect as presented in~\eqref{eqn:qat}.
Specifically, it replaces the hard-quantization map $Q(\cdot)$ in~\eqref{eqn:qat} with an \emph{aggregate} proximal map:
\begin{equation}\label{eqn:aprox}
\begin{aligned}
u^{t+1} &= u^t - \eta_t g^t, \\
w^{t+1} &= \prox_{\gamma_t\lambda\Psi}(u^{t+1}),
\end{aligned}
\end{equation}
where $\gamma_t=\sum_{s=1}^t \eta_s$. 
Here $\prox_{\gamma_t\lambda\Psi}$ is called an aggregate map because $\lambda\Psi$ is scaled by the aggregate step size~$\gamma_t$.
In fact, BinaryConnect is a special case of AProx with $\Psi$ being the indicator function of $\mathcal{Q}^d$ given in~\eqref{eqn:indicator}.
The indicator function and its proximal map (Figure~\ref{fig:quant-map}) is invariant under arbitrary scaling, thus hiding the subtlety of aggregation.

The graph of $\prox_{\gamma_t\lambda\Psi}$ can be obtained by replacing~$\eta_t$ in Figure~\ref{fig:par-prox-scale} with $\gamma_t$.
However, according to~\eqref{eqn:stepsize-cond}, we have
\[
\textstyle
\gamma_t=\sum_{s=1}^t\eta_s\to+\infty,
\]
which implies that the flat segments in the graph, now with lengths $\gamma_t\lambda(a_k-a_{k-1})$, grow larger and larger, which is \emph{opposite} to the Prox-SGD method.

For the ease of visualization, we rescale the input~$u$ by $\gamma_t^{-1}$ and obtain the graph in Figure~\ref{fig:prox-scale-gamma}.
In this scaled graph, the lengths of the flat segments $\lambda(a_k-a_{k-1})$ stay constant but the sloped segments, with lengths $\gamma_t^{-1}(q_k-q_{k-1})$, shrink as $\gamma_t$ increases.
Asymptotically, as $\gamma_t\to \infty$, the graph converges to hard quantization, as shown in Figure~\ref{fig:prox-scale-inf}.



\begin{figure}[t]
\centering
\captionbox{$\prox_{\gamma_t\lambda\Psi}(u)$ with scaled input.
\label{fig:prox-scale-gamma}}[0.5\linewidth]{
\centering
\includegraphics[width=0.85\linewidth]{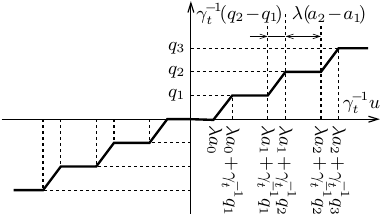}
}%
\captionbox{Asymptotic scaled map as $\gamma_t\to 0$.
\label{fig:prox-scale-inf}}[0.5\linewidth]{
\centering
\includegraphics[width=0.85\linewidth]{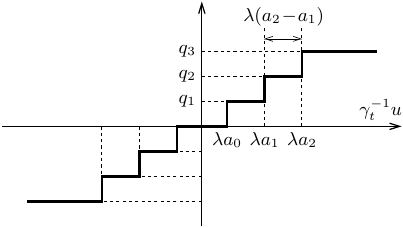}
}
\end{figure}

\subsection{AProx versus Prox-SGD and ProxQuant}
To better understand the difference between AProx and Prox-SGD, we rewrite Prox-SGD in~\eqref{eqn:prox-sgd} as
\begin{equation}\label{eqn:prox-sgd-2-step}
\begin{aligned}
u^{t+1} &= \textcolor{blue}{w^t} - \eta_t g^t, \\[-1ex]
w^{t+1} &= \prox_{\textcolor{blue}{\eta_t}\lambda\Psi}(u^{t+1}),
\end{aligned}
\end{equation}
which differ from AProx in~\eqref{eqn:aprox} in two places (highlighted in blue). Here we give an intuitive interpretation of these differences. 
First notice that the objective in~\eqref{eqn:erm+reg} is the sum of~$f$ and~$\lambda\Psi$, and both methods make progress by using the stochastic gradient of $f$ (forward step) and the proximal map of $\lambda\Psi$ (backward step) --- in a balanced manner.
\vspace{-1ex}
\begin{itemize}\itemsep 0pt
\item
In Prox-SGD, $u^{t+1}$ is a combination of $w^t$ and $-\eta_t g^t$. But $w^t$ already contains contributions from both $f$ and $\lambda\Psi$, through $\{-\eta_s g^s\}_{s=1}^{t-1}$ and $\{\prox_{\eta_s\lambda\Psi}\}_{s=1}^{t-1}$ respectively.
Therefore, from $u^{t+1}$ to obtain $w^{t+1}$, we should use $\prox_{\eta_t\lambda\Psi}$ to balance $-\eta_t g^t$.
\item
For AProx, $u^{t+1}$ is used to accumulate $\sum_{s=1}^{t}\eta_s g^s$, solely contributed from~$f$.
Thus in computing $w^{t+1}$, we need to strike a balance with the contribution from $\lambda\Psi$ with the aggregated strength $\gamma_t=\sum_{s=1}^t\eta_s$.
\end{itemize}
\vspace{-1ex}

While the total contributions from the forward steps ($-\eta_t g^t$) and backward steps ($\prox_{\lambda\Psi}$) are balanced in both cases, Prox-SGD spreads the backward steps on every iterate $w_t$
so the quantization effect on the last iterate eventually diminishes.
In contrast, AProx always applies an aggregate proximal map to generate the last iterate, 
in order to balance the accumulation of pure forward steps in ~$u^{t+1}$. 


\citet{dockhorn2021demystifying}
used the regularization framework and proximal maps to demystify BinaryConnect/STE and developed a generalization called ProxConnect.
It is derived from the generalized conditional gradient method \citep{yu2017gcd}, through the machinery of Fenchel-Rockafellar duality.
We derived AProx as an direct extension of RDA \citep{xiao2010rda}, but realized that it is indeed equivalent to ProxConnect, with some minor differences in setting $\gamma_t$. 
Nevertheless, our construction through balancing forward and backward steps provides a more intuitive understanding of the algorithm and may shed light on further development of structure-inducing optimization algorithms.


\subsection{Convergence Analysis}


To simplify the presentation, we define
\[
F_{\lambda}(w):=\E_z[f(w,z)]+\lambda \Psi(w).
\]
The following theorem concerns the convergence of AProx in terms of the weighted average
\(\bar{w}^t = \frac{1}{\sum_{s=1}^t\eta_s}\sum_{s=1}^t \eta_s w^s\).
This result has appeared in \citet[Cor.~5.2.]{dockhorn2021demystifying}. 
We include it here as a basis for proving last-iterate convergence and its proof in Appendix~\ref{sec:appendix-proof} for completeness.

\begin{theorem}\label{thm:prox-frl-stoch}
Assume that $f(w,z)$ is convex in~$w$ for any~$z$, $\Psi$ is convex, and $F_{\lambda}$ is continuous with Lipschitz constant~$G$. Also, let
$\mathcal{W}^\star$ be the set minimizers of $F_{\lambda}(w)$.
Then,
\begin{enumerate}[label=(\alph*),ref=(\alph*),nosep]
\item  
If the stepsize $\eta_t$ satisfies~\eqref{eqn:stepsize-cond} and $\{w_s\}_{s=1}^t$ are generated by algorithm~\eqref{eqn:aprox},
then the weighted average $\bar{w}^t$
converges in expectation to a point in $\mathcal{W}^\star$.
\item 
Let $w^0$ be an initial point, 
$R=\!\min_{w^\star\in\mathcal{W}^\star}\!\|w^0-w^\star\|_2$
and the step size $\eta_t=\frac{R}{2G}\sqrt{\frac{1}{t}}$, then
\vspace{-1ex}
\[
\E\bigl[F_{\lambda}(\bar{w}^t)\bigr]-F_{\lambda}(w^\star)
~\leq~ GR\frac{2+1.5\ln(t)}{\sqrt{t}},
\]
where the expectation $\E[\cdot]$ is taken with respect to the sequence of random variables $\{w^1,\ldots,w^t\}$.
\end{enumerate}
\end{theorem}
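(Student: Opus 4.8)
The plan is to treat AProx, written in the form~\eqref{eqn:aprox}, as a ``lazy'' or dual-averaging proximal method and to run the standard regret-style analysis for dual averaging with time-varying regularization weight $\gamma_t\lambda\Psi$. First I would record the variational characterization of the aggregate proximal step: since $w^{t+1}=\prox_{\gamma_t\lambda\Psi}(u^{t+1})$ and $u^{t+1}=-\sum_{s=1}^t\eta_s g^s$ (taking $u^1=0$ for simplicity, or carrying $w^0$ as a constant shift), the point $w^{t+1}$ is the exact minimizer of the convex surrogate $\Phi_t(w):=\sum_{s=1}^t\eta_s\langle g^s,w\rangle+\gamma_t\lambda\Psi(w)+\tfrac12\|w\|_2^2$. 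This is the key identity that turns the iteration into a dual-averaging update, and it lets me invoke a one-step descent inequality relating $\Phi_t(w^{t+1})$ to $\Phi_{t-1}(w^t)$ plus error terms controlled by $\eta_t\|g^t\|_2$.

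Next I would telescope those one-step inequalities from $s=1$ to $t$. Using convexity of $f(\cdot,z^s)$ to write $\langle g^s, w^s-w^\star\rangle \ge f(w^s,z^s)-f(w^\star,z^s)$, and convexity of $\Psi$, I obtain a bound of the form $\sum_{s=1}^t \eta_s\bigl(F_\lambda(w^s)-F_\lambda(w^\star)\bigr) \le \tfrac12\|w^0-w^\star\|_2^2 + \tfrac12\sum_{s=1}^t \eta_s^2\|g^s\|_2^2$ after taking expectations (the cross terms $\langle g^s-\nabla f(w^s), w^s-w^\star\rangle$ vanish in expectation since $w^s$ is measurable with respect to $z^1,\dots,z^{s-1}$, and $\|g^s\|_2\le G$ by the Lipschitz assumption on $F_\lambda$). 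Dividing by $\gamma_t=\sum_{s=1}^t\eta_s$ and applying Jensen's inequality to the convex function $F_\lambda$ gives
\[
\E\bigl[F_\lambda(\bar w^t)\bigr]-F_\lambda(w^\star) \;\le\; \frac{\|w^0-w^\star\|_2^2 + \sum_{s=1}^t \eta_s^2 G^2}{2\sum_{s=1}^t \eta_s}.
\]
For part~(b) I would plug in $\eta_s=\tfrac{R}{2G}s^{-1/2}$, use $\sum_{s=1}^t s^{-1/2}\ge \sqrt{t}$ and $\sum_{s=1}^t s^{-1}\le 1+\ln t$, and simplify to land on the stated bound $GR\,(2+1.5\ln t)/\sqrt t$ (the constants $2$ and $1.5$ come out of bounding $R/2 + (R/8)(1+\ln t)$ against the denominator; I would chase the arithmetic to confirm the numerical coefficients match). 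For part~(a), I would take any step size satisfying~\eqref{eqn:stepsize-cond}: then $\sum \eta_s\to\infty$ forces the right-hand side to vanish provided $\sum\eta_s^2<\infty$, or more carefully, since $\eta_t\to0$ the Cesàro-type ratio $\sum_{s\le t}\eta_s^2/\sum_{s\le t}\eta_s\to0$, giving $\E[F_\lambda(\bar w^t)]\to F_\lambda(w^\star)$; convergence of $\bar w^t$ itself to $\cW^\star$ then follows from the continuity/coercivity of $F_\lambda$ on a bounded set (the iterates stay bounded because $u^{t+1}/\gamma_t$ is an average of bounded gradients and $\prox_{\gamma_t\lambda\Psi}$ is nonexpansive).

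The main obstacle I anticipate is handling the \emph{time-varying} weight $\gamma_t$ on the regularizer cleanly in the telescoping step: when passing from $\Phi_{t-1}$ to $\Phi_t$ the $\Psi$-term picks up an extra $(\gamma_t-\gamma_{t-1})\lambda\Psi(w^t)=\eta_t\lambda\Psi(w^t)$, which must be matched against the $\eta_t\lambda\Psi(w^t)$ that I want to appear in $\eta_t F_\lambda(w^s)$ on the left — so the bookkeeping has to be set up so these align exactly rather than leaving a residual. This is precisely the subtlety that distinguishes AProx from Prox-SGD (the aggregate versus per-step scaling discussed above), and getting the index alignment right is where care is needed; the rest is the routine dual-averaging computation. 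Since the excerpt says this result already appears as Cor.~5.2 of \citet{dockhorn2021demystifying} and is being included ``for completeness,'' I would also cross-check that my normalization of $\gamma_t$ matches theirs so the constants are consistent.
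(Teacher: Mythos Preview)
Your proposal is correct and follows essentially the same dual-averaging/FTRL route as the paper: both start from the variational characterization $w^{t+1}=\argmin_w\bigl\{\sum_{s\le t}\eta_s(\langle g^s,w\rangle+\lambda\Psi(w))+\tfrac12\|w-w^0\|^2\bigr\}$ (your $\Phi_t$ is the paper's $\psi_t$, since $\gamma_t\lambda\Psi=\sum_{s\le t}\eta_s\lambda\Psi$), then bound the weighted regret and apply Jensen plus the online-to-stochastic conversion. The only organizational difference is that the paper combines a stability estimate $\|w^{t+1}-w^t\|\le\eta_t G$ with a ``be-the-leader'' induction to control $\sum_s\eta_s\langle g^s,w^{s+1}-w\rangle$, whereas you telescope a one-step descent inequality directly; these are the two standard, interchangeable ways to run the FTRL argument, and the ``obstacle'' you flag (the extra $\eta_t\lambda\Psi(w^t)$ when incrementing $\gamma_{t-1}\to\gamma_t$) is handled identically in both---that term is exactly what supplies the $\lambda\Psi$ part of $F_\lambda(w^s)$ on the left.
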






While convergence results on averaged iterates are conventional in the stochastic optimization literature, they are far from satisfactory for QAT. 
In particular, the averaged iterates $\bar{w}^t$ are mostly \emph{not} quantized even if every $w^t$ is quantized. 

In general, last-iterate convergence of stochastic/online algorithms is crucial for regularized optimization problems aiming for a structured solution (such as sparsity and quantization). 
Here we provide such a result for AProx.

\begin{theorem}[Last-iterate convergence of AProx for convex optimization]
\label{thm::last-iterate-convergence}
Under the same assumptions as in Theorem~\ref{thm:prox-frl-stoch}, the last iterate $w^t$ of AProx satisfies
\[
    \E\left[F_{\lambda}(w^t)\right]-F_{\lambda}(w^*)\leq GR\frac{2+1.5\ln(t)}{\sqrt{t}}.
\]
\end{theorem}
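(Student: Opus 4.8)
The plan is to upgrade the averaged-iterate bound of Theorem~\ref{thm:prox-frl-stoch} to a last-iterate bound using the standard anytime-online-to-batch / ``suffix averaging is a single iterate'' trick, tailored to AProx. The key observation is that AProx with input scaling $\gamma_t = \sum_{s=1}^t \eta_s$ is exactly regularized dual averaging (RDA) applied to $F_\lambda$: the iterate $w^{t+1}$ is the minimizer over $w$ of $\langle \sum_{s=1}^t g^s, w\rangle + \gamma_t \lambda \Psi(w) + \frac{1}{2}\|w - w^0\|_2^2$ (taking $w^0$ as the prox-center), since $u^{t+1} = w^0 - \sum_{s=1}^t \eta_s g^s$ and $\prox_{\gamma_t\lambda\Psi}(u^{t+1})$ solves that problem after absorbing the scaling. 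Thus AProx is a genuine RDA method, and the last-iterate machinery for RDA-type algorithms applies.

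First I would record the one-step ``regret'' inequality that underlies part (b) of Theorem~\ref{thm:prox-frl-stoch}: for any fixed comparator $w$ and any $t$,
\[
\sum_{s=1}^{t} \eta_s \bigl( F_\lambda(w^s) - F_\lambda(w) \bigr) \;\le\; \tfrac{1}{2}\|w^0 - w\|_2^2 + \tfrac{G^2}{2}\sum_{s=1}^{t}\eta_s^2 + (\text{martingale terms}),
\]
where the martingale terms have zero conditional expectation (they come from replacing stochastic gradients by true gradients of $f$, using convexity). This is precisely what the appendix proof of Theorem~\ref{thm:prox-frl-stoch} establishes before invoking Jensen. Next, instead of applying Jensen to the whole sum, I apply it to a \emph{suffix}: for any $1 \le \tau \le t$, convexity of $F_\lambda$ gives $F_\lambda\bigl(\frac{1}{\sum_{s=\tau}^t \eta_s}\sum_{s=\tau}^t \eta_s w^s\bigr) \le \frac{1}{\sum_{s=\tau}^t \eta_s}\sum_{s=\tau}^t \eta_s F_\lambda(w^s)$, and the difference of two prefix regret bounds controls $\sum_{s=\tau}^t \eta_s (F_\lambda(w^s) - F_\lambda(w^\star))$ up to the telescoping term $\frac{1}{2}\|w^0 - w^\star\|^2$ appearing only once (this is the step where the RDA structure, with a single fixed prox-center $w^0$, matters — for mirror-descent-style updates one would instead need a bound on the iterate diameter). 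Then I chain these suffix-average bounds across a dyadic (or harmonic) sequence of window lengths $t, t/2, t/4, \ldots$ in the manner of the standard last-iterate argument (e.g.\ the ``$\frac{1}{k}$ weighting'' telescoping of \citet{shamir2013stochastic}-type proofs, or more simply induction on $t$), so that the sum of the per-window errors collapses to the claimed $GR\,\frac{2 + 1.5\ln t}{\sqrt t}$ with the same constants as in Theorem~\ref{thm:prox-frl-stoch}(b). The choice $\eta_t = \frac{R}{2G}\sqrt{1/t}$ makes $\sum_{s=1}^t \eta_s^2 \asymp \frac{R^2}{4G^2}\ln t$ and $\sum_{s=1}^t \eta_s \asymp \frac{R}{G}\sqrt t$, which is exactly what is needed for the arithmetic to close.

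The main obstacle I anticipate is bookkeeping the constants so the final bound is \emph{literally} $GR\,\frac{2+1.5\ln t}{\sqrt t}$ rather than merely $O\!\bigl(GR\frac{\ln t}{\sqrt t}\bigr)$: the suffix-averaging/telescoping step typically inflates the logarithmic factor by another $\ln t$ or a constant, so matching the stated Theorem~\ref{thm:prox-frl-stoch}(b) constant exactly will require either a careful weighting scheme (weights $\propto 1/s$ across the telescoped windows, à la \citet{rakhlin2012making}) or an inductive argument that shows the last-iterate gap is dominated term-by-term by the averaged-iterate gap. A secondary technical point is handling the martingale-difference terms \emph{inside} the suffix bound: since we only ever take expectations at the end, I must make sure the conditional-expectation-zero property survives the subtraction of two prefix regret bounds, which it does because $w^s$ is $\mathcal{F}_{s-1}$-measurable and the filtration is nested. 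With those in hand, the result follows by taking total expectation and substituting the step-size choice.
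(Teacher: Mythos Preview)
Your high-level plan---combine the regret inequality underlying Theorem~\ref{thm:prox-frl-stoch} with a Shamir--Zhang/Orabona-style telescoping to reach the last iterate---is exactly what the paper does (via the identity $\eta_T q_T \le \frac{1}{T}\sum_t \eta_t q_t + \sum_k \frac{1}{k(k+1)}\sum_{t=T-k+1}^T \eta_t(q_t - q_{T-k})$ with $q_t = \E[F_\lambda(w^t)]-F_\lambda(w^\star)$). But the two concrete mechanisms you propose both fail, and you are missing the one ingredient that makes the argument close.

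First, ``the difference of two prefix regret bounds controls the suffix sum'' is not a valid step: regret bounds are inequalities, and subtracting $A_1 \le B_1$ from $A_2 \le B_2$ tells you nothing about $A_2 - A_1$. So you do not get a suffix bound against $w^\star$ this way, and your remark that the $\tfrac12\|w^0-w^\star\|^2$ term ``appears only once'' has no justification.

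Second---and this is the real gap---the telescoping requires a suffix regret bound \emph{against the past iterate} $w^{T-k}$, not against $w^\star$. The paper shows
\[
\sum_{t=T-k}^{T}\eta_t\bigl(\E[F_\lambda(w^t)]-F_\lambda(w^{T-k})\bigr)\;\le\;\frac{G^2}{\rho}\sum_{t=T-k}^{T}\eta_t^2,
\]
with \emph{no} Bregman/distance term on the right. For RDA/FTRL this holds not because the prox-center is fixed (your stated reason), but because $w^{T-k}=\argmin_w \psi_{T-k-1}(w)$: re-running the inductive proof of the paper's inequality~\eqref{eqn:inc-bound} over the window $[T-k,T]$ replaces $D_h(w,w^0)$ by $\psi_{T-k-1}(w)-\psi_{T-k-1}(w^{T-k})$, which vanishes at $w=w^{T-k}$. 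Your proposal never invokes a past-iterate comparator, so the distance term never disappears, the per-window error stays $\Theta(R^2)$, and the sum $\sum_k \frac{1}{k(k+1)}\cdot\Theta(R^2)$ does not yield the claimed $O(\log t/\sqrt{t})$ rate. (Incidentally, your intuition is inverted: for mirror-descent the vanishing is immediate because the anchor shifts to $w^{T-k}$; for RDA it is the FTRL optimality of $w^{T-k}$ that does the work.) Fix these two points and your sketch becomes the paper's proof.
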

The proof of Theorem~\ref{thm::last-iterate-convergence} is provided in Appendix~\ref{sec:appendix-proof-last-iterate-convergence}. We note that this convergence rate matches the average-iterate convergence rate established in Theorem~\ref{thm:prox-frl-stoch}. 


\begin{figure}[t]
\captionsetup[subfigure]{aboveskip=10pt}
\centering
\begin{subfigure}[b]{0.5\linewidth}
\begin{center}
\includegraphics[width=0.55\linewidth]{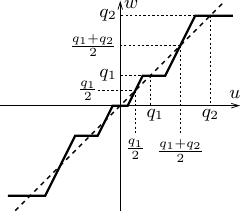}
\end{center}
\caption{$\prox_\text{PARQ}(u,\mathcal{Q},\rho)$}
\label{fig:prox-parq}
\end{subfigure}%
\begin{subfigure}[b]{0.5\linewidth}
\begin{center}
\includegraphics[width=0.55\linewidth]{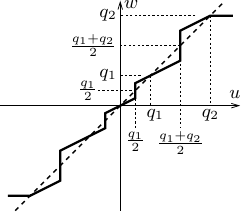}
\end{center}
\caption{$\prox_\text{BinRel}(u,\mathcal{Q},\rho)$}
\label{fig:prox-binrel}
\end{subfigure}%
\caption{Proximal maps of PARQ and BinaryRelax.}
\label{fig:prox-impl}
\end{figure}

\section{PARQ: A Practical Implementation}
\label{sec:implementation}

A practical issue for implementing AProx is how to choose the PAR parameters $\{q_k\}_{k=1}^m$ and $\{a_k\}_{k=0}^{m-1}$, as well as the regularization strength~$\lambda$; 
see their roles in the proximal map
in Figure~\ref{fig:prox-scale-gamma}.
In particular, $\{q_k\}$ are the target quantization values for $w^{t}$ and $\lambda$ and $\{a_k\}$ determine the quantization thresholds on the scaled input $\gamma_t^{-1} u^t$.
In practice, it is very hard to choose these parameters a priori for different models and datasets. 
Therefore, we propose a heuristic approach to estimate the target values $\{q_k\}$ online and at the same time avoid setting $\lambda$ and $\{a_k\}$ explicitly.

Given a vector $u^t\in\R^d$, we need to quantize it (element-wise) to a vector $w^t\in\mathcal{Q}^d$ where $w^t_i\in\mathcal{Q}$ for $i=1,\ldots,d$.
We use the least-squares binary quantization (LSBQ) approach~\cite{pouransari2020lsb} to estimate the target quantization values in~$\mathcal{Q}$.
LSBQ employs a form of $n$-bit \emph{scaled binary quantization}, i.e., let
\(w_i = \sum_{j=1}^n v_j s_j(u_i)\)
where each $v_j\in\R_+$ satisfies $v_1\geq\cdots\geq v_n\geq 0$ and each $s_j:\R\to\{-1,1\}$ is a binary function.  
The optimal $\{v_j,s_j(\cdot)\}_{j=1}^n$ for approximating $u\in\R^d$ in the least-squares sense can be found by solving the problem:
\[
\begin{array}{ll}
\minimize_{\{v_j,s_j(\cdot)\}} & \sum_{i=1}^d \bigl(u_i - \sum_{j=1}^n v_j s_j(u_i)\bigr)^2 \\
\mbox{subject to} & v_1\geq v_2 \geq \cdots \geq v_n \geq 0, \\
& s_j:\R\to\{-1,1\} , ~j=1,\ldots,n.
\end{array}
\]
For $n=1$ (1-bit quantization), the solution is well-known: 
$v_1=\|u\|_1/d$ and $s_1(u_i)=\sgn(u_i)$ 
\citep[e.g.,][]{rastegari2016xnor}.
\citet{pouransari2020lsb} derived the solutions for the $n=2$ case and the ternary case ($n=2$ with $v_1=v_2$);
see also \citet{yin2019lowbit}.
For $n>2$, there is no closed-form solution, but \citet{pouransari2020lsb} gives a simple greedy algorithm for \emph{foldable} representations, which satisfy 
$s_j(u_i)=\sgn(u_i-\sum_{\ell=1}^{j-1}v_{\ell} s_{\ell}(u_i))$ for all $j=1,\ldots,n$.

Once a set of (exact or approximate) solution $\{v_j\}_{j=1}^n$ is obtained, the resulting quantization values can be written in the form
$\pm v_1 \pm\cdots \pm v_n$ by choosing either~$+$ or~$-$ between the adjacent operands. For example, 
the largest and smallest values in~$\mathcal{Q}=\{\pm q_1,\ldots,\pm q_m\}$ are \(q_m = v_1 + \cdots + v_n\) and \(-q_m = -v_1 - \cdots - v_n\).
Since there are~$n$ binary bits, the total number of target values is $|\mathcal{Q}|=2^n$.

The selection of $\{a_k\}$ and~$\lambda$ is somewhat arbitrary and not consequential. 
We can choose them so that the asymptotic graph in Figure~\ref{fig:prox-scale-inf} matches the hard-quantization map depicted in Figure~\ref{fig:quant-map}. That is, we can let 
$\lambda a_k = (q_k+q_{k+1})/2$, but never really use them once $\mathcal{Q}$ is found by LSBQ.

While in theory we require $\gamma_t=\sum_{s=1}^t\eta_s\to +\infty$, in practice it can only reach a not-very-large constant due to a finite number of iterations we run with diminishing step sizes. Therefore its effect on scaling the horizontal axis in Figures~\ref{fig:prox-scale-gamma} and~\ref{fig:prox-scale-inf} is limited and can be absorbed by tuning the step size.
On the other hand, we would like the proximal map to be able to converge to hard-quantization by the end of training (so we have fully quantized solutions). 
For this purpose, we use an independent schedule for growing the slope of the slanted segments. 
Specifically, we emulate the proximal map in Figure~\ref{fig:prox-scale-gamma} with the one in Figure~\ref{fig:prox-parq}, where $\mathcal{Q}$ is calculated from LSBQ, and $\rho$ is the slope of the slanted segments. 
For convenience, we specify a schedule for the \emph{inverse slope} $\rho_t^{-1}$ to vary monotonically from~$1$ to~$0$ during $T$ steps of training (so the slope $\rho_t$ go to infinity).
For example, $\rho_t^{-1}$ can follow a cosine decay schedule, or one in the sigmoid family as shown at the bottom of Figure~\ref{fig:deit-results}.

Putting everything together, we have PARQ in Algorithm~\ref{alg:parq}.

\begin{algorithm}[t]
\caption{PARQ}
\label{alg:parq}
\begin{algorithmic}[0]
\STATE \textbf{input:} $w^1\in\R^d$, 
number of quantization bits $n$,
\STATE \qquad\quad step sizes $\{\eta_t\}_{t=1}^T$, slope schedule $\{\rho_t^{-1}\}_{t=1}^T$
\STATE \textbf{initialize:} $u^1=w^1$
\FOR{$t=1,2,\ldots,T\!-\!1$}
\STATE $u^{t+1} = u^t \!- \eta_t\, \nabla f(w^t, z^t)$
\STATE $\mathcal{Q}^{t+1} = \text{LSBQ}(u^{t+1}, n)$
\STATE $w^{t+1} = \prox_{\text{PARQ}}(u^{t+1}, \mathcal{Q}^{t+1}, \rho_t) $
\ENDFOR
\STATE \textbf{output:} $w^{T}$
\end{algorithmic}
\end{algorithm}

\section{Experiments}
\label{sec:experiments}

We train convolutional and vision-transformer models on image classification tasks across five bit-widths: ternary (T) and 1--4 bits. For each model and bit-width pair, we compare PARQ to two QAT methods: STE/BinaryConnect \citep{courbariaux2015binaryconnect} and BinaryRelax \citep{yin2018binaryrelax}. 

Specifically,
STE/BinaryConnect uses the hard-quantization map in Figure~\ref{fig:quant-map}, 
PARQ applies the proximal map in Figure~\ref{fig:prox-parq} with slope annealing, 
and BinaryRelax effectively uses the one in Figure~\ref{fig:prox-binrel} where the slope of slanted segments gradually decreases to $0$.
We note that $\prox_{\text{PARQ}}$ is the proximal map of a convex PAR, but STE and $\prox_{\text{BinRel}}$ do not correspond to convex regularization.

Each entry in Tables~\ref{tab:resnet20_56-results}--\ref{tab:deit-results} shows the mean and standard dev-iation of test accuracies over three randomly seeded runs.
Full-precision (FP) accuracy is shown in parentheses under each model depth/size.

We provide an open-source PyTorch package 
\url{https://github.com/facebookresearch/parq},
which implements PARQ and several other popular QAT methods and can reproduce the results presented in this section.

\begin{table}[t]
\centering
{
\small
\begin{tabular}{ccccc}\toprule
Depth & \# bits & STE & BinaryRelax & PARQ\\
\midrule
\multirow{5}{*}{\shortstack{20\\\\(91.82)}} & 1 & 89.56 \stderr{0.18} & 89.98 \stderr{0.13} & {\bftab 90.48} \stderr{0.26}\\
& T & 90.94 \stderr{0.15} & 91.25 \stderr{0.07} & {\bftab 91.45} \stderr{0.11}\\
& 2 & 91.22 \stderr{0.15} & 91.57 \stderr{0.06} & 91.71 \stderr{0.03}\\
& 3 & 91.84 \stderr{0.22} & 91.77 \stderr{0.05} & 91.97 \stderr{0.04}\\
& 4 & 91.93 \stderr{0.04} & 91.92 \stderr{0.16} & 91.84 \stderr{0.02}\\
\midrule

\multirow{5}{*}{\shortstack{56\\\\(93.08)}} & 1 & 91.55 \stderr{0.33} & 91.75 \stderr{0.37} & 91.34 \stderr{0.37}\\
& T & 92.42 \stderr{0.09} & 92.34 \stderr{0.23} & {\bftab 92.97} \stderr{0.15}\\
& 2 & 92.72 \stderr{0.27} & 92.30 \stderr{0.40} & 92.77 \stderr{0.10}\\
& 3 & 92.73 \stderr{0.44} & 92.86 \stderr{0.40} & 92.45 \stderr{0.44}\\
& 4 & 92.34 \stderr{0.23} & 92.59 \stderr{0.10} & 92.49 \stderr{0.16}\\
\bottomrule
\end{tabular}
}
\caption{ResNet test accuracy on CIFAR-10.}
\label{tab:resnet20_56-results}
\end{table}

\subsection{ResNet on CIFAR-10}
\label{sec:cifar10}

We first evaluate quantized ResNet-20 and ResNet-56 \cite{he2016deep} on CIFAR-10. 
All weights, including those in the final projection layer, are quantized. We train for 200 epochs using SGD with \num{0.9} momentum and \num{2e-4} weight decay. Following \citet{zhu2022trained}, the \num{0.1} learning rate decays by a factor of 10 at epochs 80, 120, and 150.

As shown in Table~\ref{tab:resnet20_56-results},
PARQ performs competitively to STE and BinaryRelax across all bit-widths. For 1-bit ResNet-20, it outperforms STE by nearly one accuracy point. It is the only QAT method for ternary ResNet-56 reaching within \num{\sim 0.1} points of full-precision accuracy. 

\subsection{ResNet on ImageNet}
\label{sec:resnet-imagenet}

\begin{table}[b]
\centering
{
\small
\begin{tabular}{ccccc}\toprule
Depth & \# bits & STE & BinaryRelax & PARQ\\
\midrule
\multirow{5}{*}{\shortstack{50\\\\(75.60)}} & 1 & 66.17 \stderr{0.04} & 66.14 \stderr{0.28} & {\bftab 66.71} \stderr{0.13}\\
& T & 70.94 \stderr{0.19} & 71.59 \stderr{0.11} & 71.35 \stderr{0.21}\\
& 2 & 72.38 \stderr{0.10} & {\bftab 72.64} \stderr{0.17} & 72.43 \stderr{0.03}\\
& 3 & 73.58 \stderr{0.09} & 74.02 \stderr{0.09} & 73.91 \stderr{0.13}\\
& 4 & 74.52 \stderr{0.04} & 74.58 \stderr{0.04} & 74.52 \stderr{0.01}\\
\bottomrule
\end{tabular}
}
\caption{Quantized ResNet-50 test accuracy on ImageNet.}
\label{tab:resnet50-results}
\end{table}

For QAT of ResNet-50 \cite{he2016deep} on ImageNet, we quantize all residual block weights per channel by computing $\mathcal{Q}$ row-wise over tensors. We use SGD with \num{0.1} learning rate, \num{0.9} momentum, and \num{1e-4} weight decay. The learning rate decays by a factor of 10 every 30 epochs.

Similar to the experiments on CIFAR-10, PARQ performs capably against STE and BinaryRelax in Table~\ref{tab:resnet50-results}. It shows a slight advantage in the most restrictive 1-bit case, achieving a half-point margin over the other two methods.

\subsection{DeiT on ImageNet}
\label{sec:deit-imagenet}

\begin{table}[t]
\centering
{
\small
\begin{tabular}{ccccc}\toprule
Size & \# bits & STE & BinaryRelax & PARQ\\
\midrule
\multirow{5}{*}{\shortstack{Ti\\\\(71.91)}} & 1 & 51.62 \stderr{0.18} & 52.62 \stderr{0.03} & 52.51 \stderr{0.19}\\
& T & 61.43 \stderr{0.08} & {\bftab 62.18} \stderr{0.11} & 60.99 \stderr{0.07}\\
& 2 & 64.81 \stderr{0.15} & 65.20 \stderr{0.04} & {\bftab 65.32} \stderr{0.06}\\
& 3 & 69.02 \stderr{0.11} & 69.26 \stderr{0.03} & {\bftab 69.47} \stderr{0.04}\\
& 4 & 70.95 \stderr{0.11} & 71.06 \stderr{0.09} & 71.21 \stderr{0.11}\\
\midrule

\multirow{5}{*}{\shortstack{S\\\\(79.80)}} & 1 & 70.07 \stderr{0.03} & 70.69 \stderr{0.07} & {\bftab 71.06} \stderr{0.02}\\
& T & 75.83 \stderr{0.06} & 76.02 \stderr{0.03} & {\bftab 76.30} \stderr{0.06}\\
& 2 & 77.40 \stderr{0.01} & 77.43 \stderr{0.04} & {\bftab 77.63} \stderr{0.04}\\
& 3 & 79.02 \stderr{0.14} & 79.11 \stderr{0.07} & 79.04 \stderr{0.04}\\
& 4 & 79.57 \stderr{0.04} & 79.55 \stderr{0.12} & 79.61 \stderr{0.04}\\
\midrule

\multirow{5}{*}{\shortstack{B\\\\(81.73)}} & 1 & 78.79 \stderr{0.03} & 79.02 \stderr{0.03} & {\bftab 79.35} \stderr{0.04} \\
& T & 80.50 \stderr{0.01} & 80.61 \stderr{0.08} & 80.62 \stderr{0.01}\\
& 2 & 80.73 \stderr{0.17} & 80.81 \stderr{0.14} & 80.84 \stderr{0.06}\\
& 3 & 80.54 \stderr{0.20} & {\bftab 80.94} \stderr{0.05} & 80.59 \stderr{0.12}\\
& 4 & 80.45 \stderr{0.10} & {\bftab 80.76} \stderr{0.12} & 80.35 \stderr{0.19}\\

\bottomrule
\end{tabular}
}
\vskip 0.15in
\caption{Quantized DeiT test accuracy on ImageNet.}
\label{tab:deit-results}
\end{table}

Applying QAT to a different architecture, we experiment with Data-efficient image Transformers \citep[DeiT]{touvron2021training}. Our DeiT experiments include the Ti, S, and B model sizes with 5M, 22M, and 86M parameters, respectively. Attention block weights are quantized channel-wise as in Section~\ref{sec:resnet-imagenet}. Embeddings, layer normalization parameters, and the final projection weights are left at full precision, following the setting of \citet{rastegari2016xnor}.

We use AdamW \cite{loshchilov2018decoupled} to train for 300 epochs with a \num{5e-4} learning rate and \num{0.05} weight decay. We hold the learning rate at \num{1e-8} for the final 20 epochs (after PARQ and BinaryRelax converge to hard-quantization); this boosts performance relative to the default \num{1e-5} minimum. We apply RandAugment \cite{cubuk2020randaugment} and all prior regularization strategies \cite{zhang2017mixup,yun2019cutmix} except repeated augmentation \cite{berman2019multigrain}.

We observe in Table~\ref{tab:deit-results} that PARQ's performance trends stay true across model sizes. For 1-bit DeiT-S, PARQ improves upon STE accuracy by a full point. 
Figure~\ref{fig:deit-results} shows the training loss curves of three different QAT methods along with full precision (FP) training on the DeiT-Ti model.
We observe that in the initial phase, PARQ closely follows the FP curve because the slope of the slanted segments in its proximal map (Figure~\ref{fig:prox-parq}) is close to~1. 
Then the training loss of PARQ increases due to the relatively sharp transition of the slope, and it follows the STE curve closely as its proximal map converges to hard quantization.
Figure~\ref{fig:quant-map-deit} gives snapshots of how PAR gradually induces quantization in model parameters: compare the middle stage plot with Figure~\ref{fig:prox-parq}) and the late stage plot with Figure~\ref{fig:quant-map}.

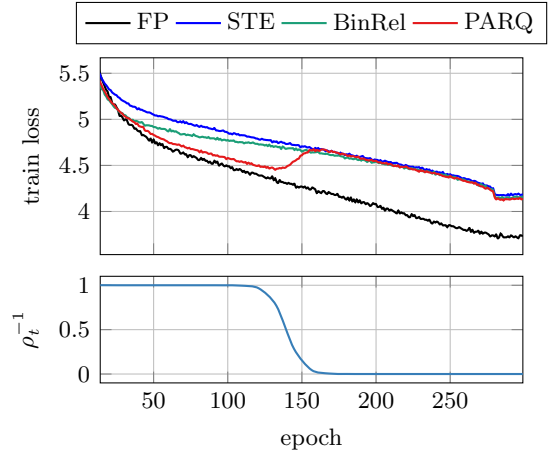
\begin{figure}[t]
\centering
\begin{tikzpicture}[font=\footnotesize]
\begin{groupplot}[
    group style={
        group size=1 by 2,
        vertical sep=0.75em,
        x descriptions at=edge bottom,
    },
    every axis plot/.append style={thick,smooth},
    width=7.2cm,
    enlarge x limits=false,
    grid,xlabel=epoch
]
    \nextgroupplot[xmin=14,ylabel=train loss,legend to name={deit-legend},legend style={legend columns=4},height=4.2cm]
    \addplot[black] table[x=epoch,y=train_loss,col sep=comma] {data/tiny_fp.csv};
    \addplot[blue] table[x=epoch,y=train_loss,col sep=comma] {data/tiny_2bit_hard.csv};
    \addplot[Teal] table[x=epoch,y=train_loss,col sep=comma] {data/tiny_2bit_binrel.csv};
    \addplot[Red] table[x=epoch,y=train_loss,col sep=comma] {data/tiny_2bit_parq.csv};
    \legend{FP,STE,BinRel,PARQ}

    \nextgroupplot[height=3cm,ylabel={$\rho_t^{-1}$}]
    \addplot[Blue,domain=14:299] {
        (1/(1 + e^(50 * (min(x / 279, 1) - 0.5))) - 1/(1 + e^(0.5 * x))) / (1/(1+e^(-0.5 * x)) - 1/(1 + e^(0.5 * x)))
    };
\end{groupplot}
\path (group c1r1.north east) -- node[inner sep=0,yshift=1.2em]{\pgfplotslegendfromname{deit-legend}} (group c1r1.north west);
\end{tikzpicture}
\vspace{-2ex}
\caption{Training loss curves for 2-bit DeiT-Ti model (top) and the inverse-slope schedule $\rho_t^{-1}$ used by PARQ (bottom).}
\label{fig:deit-results}
\end{figure}

\begin{figure*}[t!]
\setlength{\fullwidth}{\textwidth}
\centering
\begin{tikzpicture}[font=\footnotesize]
\begin{groupplot}[
    group style={group size=3 by 1,horizontal sep=3em,y descriptions at=edge left},
    every axis plot/.append style={only marks,mark=x,mark size=1pt,},
    every axis title/.append style={yshift=-0.5em},
    width=4.5cm,height=3.825cm,
    grid,
    xmin=-0.4,xmax=0.4,
    ymin=-0.2,ymax=0.2,
    xlabel={$u$},ylabel={$w$},
]
\nextgroupplot[title={epoch 6}]
\addplot[only marks,mark=x,mark size=1pt] table[col sep=comma] {data/tiny_2bit_005.csv};
\nextgroupplot[title={epoch 150}]
\addplot[only marks,mark=x,mark size=1pt] table[col sep=comma] {data/tiny_2bit_149.csv};
\nextgroupplot[title={epoch 300}]
\addplot[only marks,mark=x,mark size=1pt] table[col sep=comma] {data/tiny_2bit_299.csv};
\end{groupplot}
\end{tikzpicture}
\vspace{-2ex}
\caption{PARQ proximal maps during early, middle, and late stages of training 2-bit DeiT-Ti (value weights from an attention layer).}
\label{fig:quant-map-deit}
\end{figure*}
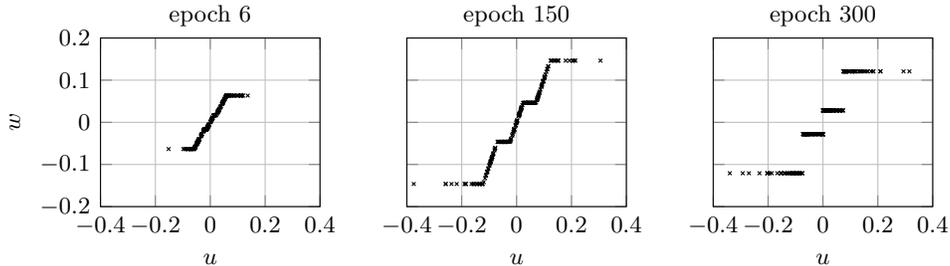

\newpage

\section{Conclusion}
\label{sec:conclusion}

We developed a principled approach for quantization-aware training (QAT) based on a framework of convex, piecewise-affine regularization (PAR).
In order to avoid the diminishing regularization effect of the standard proximal SGD method, we derive an aggregate proximal (AProx) algorithm and prove its last-iterate convergence.



Our experiments demonstrate that PARQ achieves competitive performance compared with QAT methods that correspond to using nonconvex regularization.
Compared with using hard-quantization (STE) throughout the training process, the gradual evolution of PARQ from piecewise-affine soft quantization to hard quantization helps the training process to be more stable, and often converges to better local minima. This is more evident in the most demanding cases of low-bit quantization of smaller models.




%




\newpage
\appendix
\onecolumn

\section{Convergence analysis}
\subsection{Proof of Theorem~\ref{thm:prox-frl-stoch}}
\label{sec:appendix-proof}


We consider the framework of online convex optimization, which is more general than stochastic optimization. 
In particular, let $f_t=f(\cdot,z^t)$ be a function presented to us at each iteration~$t=1,2,\ldots$, and $\Psi$ be a regularization function that we use throughout the whole process.
The two-step presentation of AProx in~\eqref{eqn:aprox} can be written in one-step as 
\begin{equation}\label{eqn:prox-frl}
w^{t+1} = \argmin_{w\in \cW} ~\biggl\{ \sum_{s=1}^t \eta_s\bigl(\langle g^s, w\rangle + \lambda\Psi(w)\bigr) + \frac{1}{2}\|w-w^0\|_2^2 \biggr\},
\end{equation}
where $w^0$ is the initial weight vector and $g^t=\nabla f_t(w^t)$.
Moreover, we use a more general distance generating function~$h$ to replace $(1/2)\|\cdot\|_2^2$, and define the Bregman divergence as
\[
D_h(u,w) = h(u) - h(w) - \langle \nabla h(w), u-w\rangle.
\]
With Bregman divergence, a more general form of AProx can be written as
\begin{equation}\label{eqn:prox-frl-general}
w^{t+1} = \argmin_{w\in\cW} ~\biggl\{ \sum_{s=1}^t \bigl(\eta_s\langle g^s, w\rangle + \lambda \Psi(w) \bigr) + D_h(w,w^0) \biggr\}.
\end{equation}
\begin{assumption}\label{asmp:aprox}
We make the following assumptions:
\begin{enumerate}[label=(\alph*),ref=(\alph*)]
\item Each loss function $f_t$ is convex and Lipschitz continuous with Lipschitz constant $G_f$.
\label{asmp:f-lip}
\item The regularizer $\Psi$ is convex and Lipschitz continuous with Lipschitz constant $G_\Psi$.
\label{asmp:Psi-lip}
\item The function $h$ is differentiable and strongly convex with convexity parameter~$\rho$.
\label{asmp:h-sc}
\end{enumerate}
\end{assumption}

It follows from Assumption~\ref{asmp:aprox}\ref{asmp:h-sc} that $D_h(u,w)$ is strongly convex in~$w$ with convexity parameter~$\rho$.

\bigskip

\begin{theorem}[Regret bound for AProx]
\label{thm:prox-frl-general}
Under Assumption~\ref{asmp:aprox}, for any $w\in\R^d$, it holds that 
\begin{equation}\label{eqn:frl-regret-bound}
\sum_{s=1}^t \eta_s\bigl(f_s(w^s)+\lambda\Psi(w^s) - f_s(w)-\lambda\Psi(w)\bigr)
\leq \frac{(G_f+\lambda G_\Psi)^2}{\rho} \sum_{s=1}^t 2\eta_s^2 + D_h(w,w^0).
\end{equation}
\end{theorem}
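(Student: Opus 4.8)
The plan is to read \eqref{eqn:prox-frl-general} as a (weighted) follow-the-regularized-leader / dual-averaging update and run the standard regret analysis for a strongly convex proximal term. First I would \emph{linearize}: since each $f_s$ is convex with (sub)gradient $g^s$ at $w^s$, for any fixed comparator $w$ we have $f_s(w^s)-f_s(w)\le\langle g^s,w^s-w\rangle$, so it suffices to bound $\sum_{s=1}^t\bigl(\ell_s(w^s)-\ell_s(w)\bigr)$, where $\ell_s(w):=\eta_s\bigl(\langle g^s,w\rangle+\lambda\Psi(w)\bigr)$ is exactly the per-step surrogate that the algorithm minimizes. The function $\ell_s$ is convex, and by Assumption~\ref{asmp:aprox}\ref{asmp:f-lip}--\ref{asmp:Psi-lip} every subgradient of $\ell_s$ has norm at most $\eta_s(G_f+\lambda G_\Psi)=:\eta_s G$.

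Next I would introduce $\Theta_t(w):=\sum_{s=1}^{t-1}\ell_s(w)+D_h(w,w^0)$, so that $w^t=\argmin_w\Theta_t(w)$, $\Theta_{t+1}=\Theta_t+\ell_t$, and each $\Theta_t$ is $\rho$-strongly convex by Assumption~\ref{asmp:aprox}\ref{asmp:h-sc}. The core step is a one-step recursion for $\Theta_t^\star:=\min_w\Theta_t(w)$: optimality of $w^t$ means $0\in\partial\Theta_t(w^t)$, so strong convexity gives $\Theta_t(w)\ge\Theta_t^\star+\tfrac{\rho}{2}\|w-w^t\|^2$, whence $\Theta_{t+1}^\star=\min_w\{\Theta_t(w)+\ell_t(w)\}\ge\Theta_t^\star+\min_w\{\tfrac{\rho}{2}\|w-w^t\|^2+\ell_t(w)\}\ge\Theta_t^\star+\ell_t(w^t)-\tfrac{\eta_t^2 G^2}{2\rho}$, where the last inequality lower-bounds $\ell_t$ by its supporting hyperplane at $w^t$ and minimizes the resulting quadratic. (Equivalently, one can invoke the ``be-the-leader'' lemma and bound the stability term $\ell_t(w^t)-\ell_t(w^{t+1})\le\eta_t G\,\|w^t-w^{t+1}\|$, with $\|w^t-w^{t+1}\|$ controlled by strong convexity of $\Theta_{t+1}$; either route yields the stated form, the factor of $2$ in $\sum 2\eta_s^2$ being accounted for by the slack in that estimate.)

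Then I would telescope the recursion from $s=1$ to $t$, using $\Theta_1^\star=\min_w D_h(w,w^0)=0$ (attained at $w^1=w^0$), to obtain $\Theta_{t+1}^\star\ge\sum_{s=1}^t\ell_s(w^s)-\tfrac{G^2}{2\rho}\sum_{s=1}^t\eta_s^2$; on the other hand $\Theta_{t+1}^\star\le\Theta_{t+1}(w)=\sum_{s=1}^t\ell_s(w)+D_h(w,w^0)$ for the fixed comparator $w$. Subtracting and unwinding the definition of $\ell_s$, together with the linearization from the first step, gives $\sum_{s=1}^t\eta_s\bigl(f_s(w^s)+\lambda\Psi(w^s)-f_s(w)-\lambda\Psi(w)\bigr)\le\sum_{s=1}^t\bigl(\ell_s(w^s)-\ell_s(w)\bigr)\le\tfrac{(G_f+\lambda G_\Psi)^2}{\rho}\sum_{s=1}^t 2\eta_s^2+D_h(w,w^0)$, which is \eqref{eqn:frl-regret-bound}.

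I expect the only real obstacle to be carefully tracking the nonsmoothness of $\Psi$ through the recursion: one must argue entirely with subgradients, use the \emph{zero} subgradient of $\Theta_t$ at its minimizer in the strong-convexity inequality, and confirm the subgradient bound $\eta_t(G_f+\lambda G_\Psi)$ for $\ell_t$ (needed only at $w^t$). A minor technicality is fixing the initialization so that $\Theta_1^\star=0$; beyond that the argument is routine bookkeeping, and the precise constant $2$ enters only through the slack in the quadratic-minimization (or stability) step.
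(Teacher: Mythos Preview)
Your proposal is correct. Both your argument and the paper's are standard FTRL/dual-averaging regret analyses, but they are organized differently. The paper proceeds in two pieces: it first proves by induction the ``be-the-leader'' inequality $\sum_{s=1}^t\ell_s(w^{s+1})\le\sum_{s=1}^t\ell_s(w)+D_h(w,w^0)$ (your $\ell_s$ notation), and separately bounds the stability term $\ell_s(w^s)-\ell_s(w^{s+1})$ via $\|w^{s+1}-w^s\|\le\eta_s G/\rho$, then adds the two. Your primary route instead runs a direct recursion on the minimum values $\Theta_t^\star$, lower-bounding $\Theta_{t+1}^\star$ by $\Theta_t^\star+\ell_t(w^t)-\eta_t^2G^2/(2\rho)$ via strong convexity and a supporting hyperplane for $\ell_t$, and telescopes; this is the ``strong FTRL lemma'' packaging and is somewhat more compact. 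You even mention the paper's be-the-leader-plus-stability route as your alternative, so you have both in hand. A minor bonus: your recursion actually yields $\tfrac{G^2}{2\rho}\sum_s\eta_s^2$ (and the paper's proof yields $\tfrac{G^2}{\rho}\sum_s\eta_s^2$), both already tighter than the stated $\tfrac{G^2}{\rho}\sum_s 2\eta_s^2$, so your remark about the factor of $2$ being slack is exactly right.
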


\begin{proof}
We adapt the proof of \citet[Theorem~4.3]{bubeck2015cvxopt} by adding the regularizer~$\Psi$ and replacing the term $h(w)-h(w^0)$ with $D_h(w,w^0)$. An advantage of this replacement is that we can use any initial point~$w^0$ while the proof in \cite{xiao2010rda,bubeck2015cvxopt} requires $w^0=\argmin h(w)$.

Let $w^0\in\R^d$ be an arbitrary initial point and define $\psi_0(w)=D_h(w,w^0)$. 
For $t\geq 1$, define
\[
\psi_t(w) := \sum_{s=1}^t \eta_s\bigl(\langle g^s,w\rangle + \lambda \Psi(w)\bigr) + D_h(w, w^0).
\]
The AProx algorithm~\eqref{eqn:prox-frl-general} can be expressed as, for $t\geq 0$, 
\[
w^{t+1} = \argmin_{w}~\psi_t(w).
\]
Since $D_h(w, w^0)$ is strongly convex in~$w$ with convexity parameter~$\rho$, the same property holds for~$\psi_t$ for all $t\geq 0$.
According to a basic result on minimizing strongly convex functions
\citep[e.g.,][Lemma~3.2]{chenteboulle1993} and the fact that $w^{t+1}$ minimizes $\psi_t$, we have
\begin{equation}\label{eqn:sc-min-gap}
\psi_t(w^{t+1}) \leq \psi_t(w) - \frac{\rho}{2}\|w-w^{t+1}\|^2, 
\qquad t=0,1,2,\ldots.
\end{equation}
From the definition of $\psi_t$ and $\psi_{t-1}$, we have
\begin{equation}\label{eqn:psi-decomp}
\psi_t(w^t) - \psi_t(w^{t+1})
= \psi_{t-1}(w^t) - \psi_{t-1}(w^{t+1}) + \eta_t\bigl(\langle g^t, w^t-w^{t+1}\rangle + \lambda \Psi(w^t)- \lambda \Psi(w^{t+1})\bigr).
\end{equation}
For the left-hand side of~\eqref{eqn:psi-decomp}, we apply~\eqref{eqn:sc-min-gap} to obtain
\[
\frac{\rho}{2}\|w^{t+1}-w^t\|^2 \leq \psi_t(w^t)-\psi_t(w^{t+1}).
\]
For the first term on the right-hand side of~\eqref{eqn:psi-decomp}, we apply~\eqref{eqn:sc-min-gap} again for $\psi_{t-1}$ to obtain
\[
\psi_{t-1}(w^t)-\psi_{t-1}(w^{t+1}) \leq - \frac{\rho}{2}\|w^{t+1}-w^t\|^2 .
\]
For the second term on the right-hand side of~\eqref{eqn:psi-decomp}, we have
\begin{align}
\langle g^t, w^t-w^{t+1}\rangle + \lambda \Psi(w^t)- \lambda \Psi(w^{t+1})
&\leq \|g^t\|_*\|w^{t+1}-w^t\| + \lambda \Psi(w^t)- \lambda \Psi(w^{t+1})
\nonumber \\
&\leq G_f\|w^{t+1}-w^t\| + +\lambda G_\Psi\|w^{t+1}-w^t\| \nonumber \\
&= (G_f+\lambda G_\Psi)\|w^{t+1}-w^t\|,
\label{eqn:lip-bound}
\end{align}
where in the first inequality we used H\"older's inequality, and in the second inequality we used Assumptions~\ref{asmp:aprox}\ref{asmp:f-lip} and~\ref{asmp:aprox}\ref{asmp:Psi-lip} respectively.
Combining the above three inequalities with~\eqref{eqn:psi-decomp}, we get
\[
\rho\|w^{t+1}-w^t\|^2 \leq \eta_t (G_f+\lambda G_\Psi)\|w^{t+1}-w^t\|,
\]
which further implies
\[
\|w^{t+1}-w^t\| \leq \eta_t (G_f+\lambda G_\Psi)/\rho.
\]
Combining this with~\eqref{eqn:lip-bound} yields
\begin{equation}\label{eqn:inner-bound}
\langle g^t, w^t-w^{t+1}\rangle + \lambda \Psi(w^t)- \lambda \Psi(w^{t+1})
\leq \eta_t (G_f+\lambda G_\Psi)^2 /\rho.
\end{equation}

Next we prove that the following inequality holds for all $w\in\R^d$ and all $t\geq 0$:
\begin{equation}\label{eqn:inc-bound}
\sum_{s=1}^t \eta_s\bigl(\langle g^s, w^{s+1}\rangle + \lambda \Psi(w^{s+1})\bigr) 
\leq \sum_{s=1}^t \eta_s\bigl(\langle g^s, w\rangle + \lambda \Psi(w)\bigr) + D_h(w,w^0). 
\end{equation}
We proceed by induction. For the base case $t=0$, the desired inequality becomes $D_h (w,w^0)\geq 0$, which is always true by the definition of~$D_h$.
Now we suppose~\eqref{eqn:inc-bound} holds for $t-1$ and apply it with $w=w^{t+1}$ in the first inequality below:
\begin{align*}
&\sum_{s=1}^t \eta_s\bigl(\langle g^s, w^{s+1}\rangle + \lambda \Psi(w^{s+1})\bigr) \\
&= \sum_{s=1}^{t-1} \eta_s\bigl(\langle g^s, w^{s+1}\rangle + \lambda \Psi(w^{s+1})\bigr) + \eta_t\bigl(\langle g^t, w^{t+1}\rangle + \lambda \Psi(w^{t+1})\bigr) \\
&\leq \sum_{s=1}^{t-1} \eta_s\bigl(\langle g^s, w^{t+1}\rangle + \lambda \Psi(w^{t+1})\bigr) + D_h(w^{t+1},w^0) + \eta_t\bigl(\langle g^t, w^{t+1}\rangle + \lambda \Psi(w^{t+1})\bigr) \\
&=\sum_{s=1}^t \eta_s\bigl(\langle g^s, w^{t+1}\rangle + \lambda \Psi(w^{t+1})\bigr) + D_h(w^{t+1},w^0)  \\
&\leq \sum_{s=1}^t \eta_s\bigl(\langle g^s, w\rangle + \lambda \Psi(w)\bigr) + D_h(w,w^0), \qquad \forall\,w\in\mathcal{W}.
\end{align*}
In the last inequality above, we recognized the definition of $\psi_t$ and used the fact that $w^{t+1}$ is the minimizer of~$\psi_t$.
This finishes the proof of~\eqref{eqn:inc-bound}.

Finally we add $\sum_{s=1}^t\eta_s\left(\langle g^s,w^s\rangle+\Psi(w^s)\right)$ to both sides of~\eqref{eqn:inc-bound} and rearrange terms to obtain
\begin{align}
\sum_{s=1}^t \eta_s\bigl(\langle g^s, w^s-w\rangle + \lambda \Psi(w^s)- \lambda \Psi(w)\bigr) 
\leq &
\sum_{s=1}^t \eta_s\bigl(\langle g^s, w^s-w^{s+1}\rangle + \lambda \Psi(w^s)- \lambda \Psi(w^{s+1})\bigr) + D_h(w,w^0). 
\label{eqn:delta-bound}
\end{align}
For the left-hand side of~\eqref{eqn:delta-bound}, we use convexity of~$f_s$ to obtain
\[
f_s(w^s)-f_s(w) \leq \langle g^s, w^s - w\rangle. 
\]
For the right-hand side of~\eqref{eqn:delta-bound}, we apply~\eqref{eqn:inner-bound} to obtain
\[
\sum_{s=1}^t \eta_s\bigl(\langle g^s, w^s-w^{s+1}\rangle + \lambda \Psi(w^s)- \lambda \Psi(w^{s+1})\bigr) 
\leq \frac{(G_f+\lambda G_\Psi)^2}{\rho}\sum_{s=1}^t \eta_s^2.
\]
Combining the above three inequalities together, we have
\[
\sum_{s=1}^t \eta_s\bigl(f_s(w^s)+\Psi(w^s) - f_s(w)- \lambda \Psi(w)\bigr)
\leq \frac{(G_f+\lambda G_\Psi)^2}{\rho}\sum_{s=1}^t \eta_s^2 + D_h(w,w^0).
\]
This finishes the proof of Theorem~\ref{thm:prox-frl-general}.
\end{proof}

Now we consider the stochastic optimization problem of minimizing 
$f(w) + \lambda \Psi(w)$ where the loss function $f(w):=\E_z[f(w,z)]$.
We can regard the sequence of loss functions $f_t$ in the online optimization setting as $f(\cdot,z^t)$ and compare with $w^\star=\argmin~f(w)+\lambda \Psi(w)$.
In this case, the regret bound~\eqref{eqn:frl-regret-bound} becomes
\[
\sum_{s=1}^t \eta_s\bigl(f(w^s,z^s)+\lambda \Psi(w^s) - f(w^\star,z^s)- \lambda \Psi(w^\star)\bigr)
\leq \frac{(G_f+\lambda G_\Psi)^2}{\rho}\sum_{s=1}^t \eta_s^2 + D_h(w^\star,w^0).
\]
Using a standard online-to-stochastic conversion argument \citep[e.g.,][Theorem~3]{xiao2010rda}, we can derive
\begin{equation}
    \sum_{s=1}^t \eta_s\left(\E\bigl[f(w^s)+\lambda \Psi(w^s)\bigr] - f(w^\star)- \lambda \Psi(w^\star)\right)
\leq \frac{(G_f+\lambda G_\Psi)^2}{\rho}\sum_{s=1}^t \eta_s^2 + D_h(w^\star,w^0),
\label{eq::average-convergence}
\end{equation}
where the expectation $\E[\cdot]$ is taken with respect to the random variables $\{w^1,\ldots,w^t\}$, which in turn depends on $\{z^1,\ldots,z^t\}$.

For the ease of presentation, we denote $R^2=\min_{w\in\mathcal{W}}D_h(w,w^0)$.
Moreover, we define a weighted average of all iterates up to iteration~$t$:
\[
\bar{w}^t = \frac{1}{\sum_{s=1}^t\eta_s}\sum_{s=1}^t \eta_s w^s.
\]
Then by convexity of~$f$ and $\Psi$, we obtain
\begin{equation}\label{eqn:stoch-bound}
\E\bigl[f(\bar{w}^t)+\lambda \Psi(\bar{w}^t)\bigr]-f(w^\star)- \lambda \Psi(w^\star)\leq 
\frac{\frac{(G_f+\lambda G_\Psi)^2}{\rho}\sum_{s=1}^t \eta_s^2 + R^2}{\sum_{s=1}^t\eta_s}.
\end{equation}

\paragraph{Constant stepsize.} 
If the total number of iterations~$T$ is known ahead of time, then we can choose an optimal constant stepsize. 
Let $\eta_s=\eta$ for all $s=1,\ldots,T$, then the bound in~\eqref{eqn:stoch-bound} becomes
\[
\frac{\frac{(G_f+\lambda G_\Psi)^2}{\rho}T\eta^2 + R^2}{T\eta}
=\frac{(G_f+\lambda G_\Psi)^2}{\rho} \eta + \frac{R^2}{T\eta}.
\]
In order to minimize the above bound, we take $\eta=\frac{R}{G_f+\lambda G_\Psi}\sqrt{\frac{\rho}{T}}$ and obtain
\[
\E\bigl[f(\bar{w}^T)+\lambda \Psi(\bar{w}^T)\bigr]-f(w^\star)- \lambda \Psi(w^\star)
~\leq~ 2(G_f+\lambda G_\Psi)R\sqrt{\frac{1}{\rho\, T}}.
\]

\paragraph{Diminishing stepsize.}
The right-hand side of~\eqref{eqn:stoch-bound} has the same form as the convergence rate bound for the classical stochastic gradient or subgradient method \citep[e.g.,][Section~3.2.3]{nesterov2004book}.
A classical sufficient condition for convergence is 
\[
\sum_{s=1}^\infty \eta_s = +\infty, 
\qquad
\sum_{s=1}^\infty \eta_s^2 < +\infty.
\]
In particular, if we take $\eta_t=\frac{R}{2(G_f+\lambda G_\Psi)}\sqrt{\frac{\rho}{t}}$, we have 
\[
\E\bigl[f(\bar{w}^t)+\lambda \Psi(\bar{w}^t)\bigr]-f(w^\star)- \lambda \Psi(w^\star)
~\leq~ (G_f+\lambda G_\Psi)R \frac{(2+1.5\ln(t))}{\sqrt{\rho t}}.
\]


Finally, Theorem~\ref{thm:prox-frl-stoch} is obtained with some simplification. 
In particular, if we choose the Bregman divergence as the Euclidean distance $\frac{1}{2}\|\cdot\|_2^2$, then we have $\rho=1$. This leads to
\[
\E\bigl[f(\bar{w}^t)+\lambda \Psi(\bar{w}^t)\bigr]-f(w^\star)- \lambda \Psi(w^\star)
~\leq~ G R \frac{(2+1.5\ln(t))}{\sqrt{t}},
\]
where $G:=G_f+\lambda G_\Psi$. This completes the proof.

\subsection{Proof of Theorem~\ref{thm::last-iterate-convergence}}
\label{sec:appendix-proof-last-iterate-convergence}
For simplicity, we denote $F_{\lambda}(w)=f(w)+\lambda\Psi(w)$ and $G=G_f+\lambda G_\Psi$ where $G_f$ and $G_\Psi$ are the Lipschitz constants of $f$ and $\Psi$, respectively.

To establish the last-iterate convergence of AProx, we first introduce the following lemma, which connects the convergence of the last iteration to the convergence of the average iteration.
\begin{lemma}[Lemma~1 in \cite{Orabona_2020}]
\label{lem::last-averge-convertion}
    Given that $\{\eta_t\}_{t=1}^T$ is a non-increasing positive sequence and $\{q_t\}_{t=1}^T$ is a nonnegative sequence, the following inequality holds 
    \begin{equation}
        \eta_T q_T \leqslant \frac{1}{T} \sum_{t=1}^T \eta_t q_t+\sum_{k=1}^{T-1} \frac{1}{k(k+1)} \sum_{t=T-k+1}^T \eta_t\left(q_t-q_{T-k}\right).
    \end{equation}
\end{lemma}
Upon setting $q_t=\E\left[F_{\lambda}(w^t)\right]-F_{\lambda}(w^*)$ in Lemma~\ref{lem::last-averge-convertion}, we derive that  
\begin{equation}
        \begin{aligned}
            \eta_T \left(\E\left[F_{\lambda}(w^T)\right]-F_{\lambda}(w^*)\right) &\leq \frac{1}{T} \sum_{t=1}^T \eta_t \left(\E\left[F_{\lambda}(w^t)\right]-F_{\lambda}(w^*)\right) \\
            &\quad +\sum_{k=1}^{T-1} \frac{1}{k(k+1)} \sum_{t=T-k+1}^T \eta_t\E\left[F_{\lambda}(w^t)-F_{\lambda}(w^{T-k})\right].
        \end{aligned}
    \end{equation}
For the first term on the right-hand side, we apply Equation~\ref{eq::average-convergence}, which yields 
\begin{equation}
    \frac{1}{T} \sum_{t=1}^T \eta_t \left(\E\left[F_{\lambda}(w^t)\right]-F_{\lambda}(w^*)\right)\leq \frac{G^2}{\rho T}\sum_{t=1}^{T}\eta_t^2+\frac{D_h(w^*, w^0)}{T}.
\end{equation}
To control the second term, we note that for any $1\leq k\leq T-1$
\begin{equation}
    \begin{aligned}
        \sum_{t=T-k+1}^T \eta_t\E\left[F_{\lambda}(w^t)-F_{\lambda}(w^{T-k})\right]&=\sum_{t=T-k}^T \eta_t\E\left[F_{\lambda}(w^t)-F_{\lambda}(w^{T-k})\right]\leq \frac{G^2}{\rho}\sum_{t=T-k}^{T}\eta_t^2.
    \end{aligned}
\end{equation}
Here we apply Equation~\ref{eq::average-convergence} again for the last inequality upon setting $w^\star=w^{T-k}$ and use the fact that $D_h(w, w)=0$ for all $w\in \cW$. 

Combining the above two components together, we have 
\begin{equation}   
\label{eq::appendix-last-iterate}
\E\left[F_{\lambda}(w^T)\right]-F_{\lambda}(w^*)\leq \frac{G^2}{\eta_T\rho}\left(\frac{1}{T}\sum_{t=1}^T\eta_t^2+\sum_{k=1}^{T-1}\frac{1}{k(k+1)}\sum_{t=T-k}^T\eta_t^2\right)+\frac{D_h(w^*, w^0)}{\eta_T T}.
\end{equation}

\paragraph{Constant stepsize.} 
If the total number of iterations~$T$ is known ahead of time, then we can choose an optimal constant stepsize. 
Let $\eta_t=\eta$ for all $s=1,\ldots,T$, then the bound in~\eqref{eq::appendix-last-iterate} becomes
\begin{equation}            \E\left[F_{\lambda}(w^T)\right]-F_{\lambda}(w^*)\leq \frac{G^2}{\rho}\left(1+\sum_{k=1}^{T-1}\frac{1}{k}\right)\eta+\frac{D_h(w^*, w^0)}{\eta T}\leq \frac{G^2}{\rho}\left(2+\ln(T)\right)\eta+\frac{D_h(w^*, w^0)}{\eta T}.
\end{equation}
Here we use the fact that $\sum_{k=1}^n\frac{1}{k}\leq 1+\ln(n)$ for all $n\geq 1$.
In order to minimize the above bound, we take $\eta=\frac{1}{G}\sqrt{\frac{D_h(w^*, w^0)\rho}{(2+\ln(T))T}}$ and obtain
\begin{equation}
    \E\left[F_{\lambda}(w^T)\right]-F_{\lambda}(w^*)\leq 2G\sqrt{\frac{D_h(w^*, w^0)(2+\ln(T))}{\rho T}}.
\end{equation}

\paragraph{Diminishing stepsize.}
Suppose we set the stepsize $\eta_t=\frac{\eta}{\sqrt{t}}$. Then, Equation~\ref{eq::appendix-last-iterate} reduces to 
\begin{equation}
    \begin{aligned}
        \E\left[F_{\lambda}(w^T)\right]-F_{\lambda}(w^*)&\leq \frac{\eta \sqrt{T}G^2}{\rho}\left(\frac{1}{T}\sum_{t=1}^T\frac{1}{t}+\sum_{k=1}^{T-1}\frac{1}{k(k+1)}\sum_{t=T-k}^T\frac{1}{t}\right)+\frac{D_h(w^*, w^0)}{\eta \sqrt{T}}\\
    &\leq \frac{\eta \sqrt{T}G^2}{\rho}\left(\frac{1+\ln(T)}{T}+\sum_{k=1}^{T-1}\frac{1}{k(k+1)}\sum_{t=T-k}^T\frac{1}{t}\right)+\frac{D_h(w^*, w^0)}{\eta \sqrt{T}}.
    \label{eq::diminishing}
    \end{aligned}
\end{equation}
To proceed, note that 
\begin{equation}
    \sum_{t=T-k+1}^T \frac{1}{t} \leq \int_{T-k}^T \frac{1}{t} d t=\ln \left(\frac{T}{T-k}\right)=\ln \left(1+\frac{k}{T-k}\right) \leq \frac{k}{T-k}.
\end{equation}
Therefore, we have 
\begin{equation}
\begin{aligned}
\sum_{k=1}^{T-1} \frac{1}{k(k+1)} \sum_{t=T-k}^T \frac{1}{t}
& =\sum_{k=1}^{T-1} \frac{1}{k(k+1)}\left(\frac{1}{T-k}+\sum_{t=T-k+1}^T \frac{1}{t}\right) \\
& \leq \sum_{k=1}^{T-1} \frac{1}{k(T-k)} \\
& =\sum_{k=1}^{T-1} \frac{1}{k T}+\sum_{k=1}^{T-1} \frac{1}{T(T-k)} \\
& = 2\sum_{k=1}^{T-1} \frac{1}{k T}\\
& \leq 2 \frac{1+\ln (T)}{T}.
\end{aligned}
\end{equation}
Invoking this result into Equation~\ref{eq::diminishing}, we further have 
\begin{equation}
    \E\left[F_{\lambda}(w^T)\right]-F_{\lambda}(w^*)\leq \frac{3\eta G^2(1+\ln(T))}{\rho \sqrt{T}}+\frac{D_h(w^*, w^0)}{\eta\sqrt{T}}.
\end{equation}
Hence, upon setting $\eta=\frac{1}{G}\sqrt{\frac{D_h(w^*, w^0)\rho}{2}}$, we derive that
\begin{equation}
    \E\left[F_{\lambda}(w^T)\right]-F_{\lambda}(w^*)\leq G\left(2\sqrt{2}+\frac{3}{\sqrt{2}}\ln(T)\right)\sqrt{\frac{D_h(w^*, w^0)}{\rho T}}.
\end{equation}
Specifically, if we choose the Bregman divergence as the Euclidean distance $\frac{1}{2}\|\cdot\|_2^2$, then we have $\rho=1$. Upon defining $R=\min_{w^\star\in\mathcal{W}^\star}\|w^0-w^\star\|_2$, we have 
\begin{equation}
    \E\left[F_{\lambda}(w^T)\right]-F_{\lambda}(w^*)\leq GR\frac{(2+\frac{3}{2}\ln(T))}{\sqrt{T}}.
\end{equation}

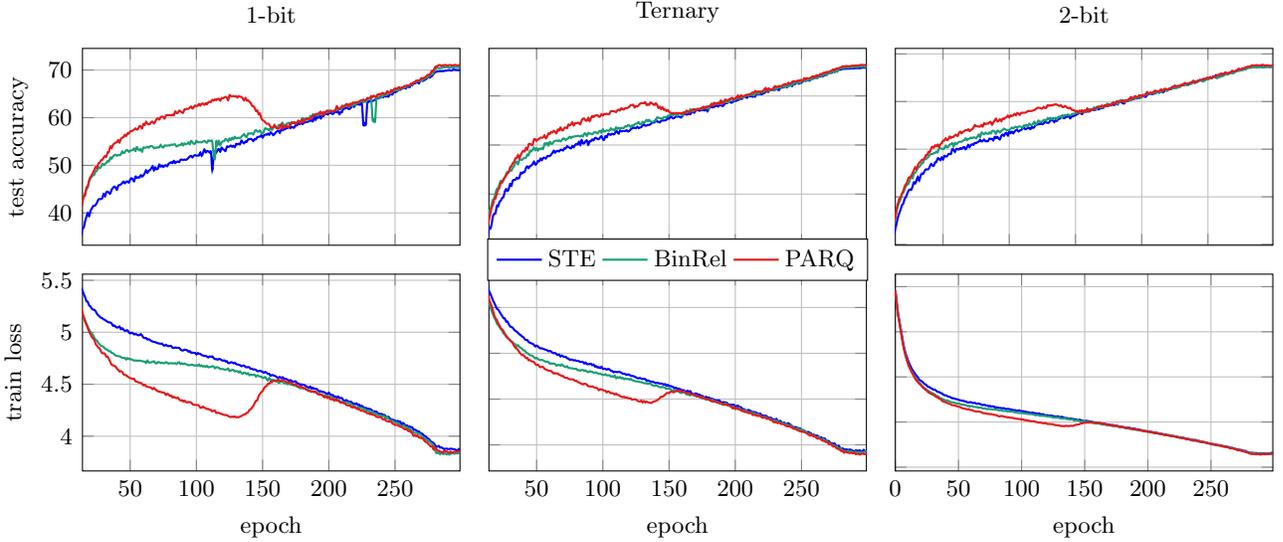
\begin{figure}[t]
\centering
\begin{tikzpicture}[font=\footnotesize]
\begin{groupplot}[
    group style={
        group size=3 by 2,
        x descriptions at=edge bottom,
        horizontal sep=1em,
        vertical sep=1em,
    },
    every axis plot/.append style={thick,smooth},
    width=6.6cm,height=4.2cm,
    enlarge x limits=false,
    grid,xlabel=epoch
]
    \nextgroupplot[title={1-bit},xmin=14,ylabel={test accuracy}]
    \addplot[blue] table[x=epoch,y=test_acc,col sep=comma] {data/small_1bit_hard.csv};
    \addplot[Teal] table[x=epoch,y=test_acc,col sep=comma] {data/small_1bit_binrel.csv};
    \addplot[Red] table[x=epoch,y=test_acc,col sep=comma] {data/small_1bit_parq.csv};

    \nextgroupplot[title={Ternary},xmin=14,yticklabels=\empty]
    \addplot[blue] table[x=epoch,y=test_acc,col sep=comma] {data/small_tern_hard.csv};
    \addplot[Teal] table[x=epoch,y=test_acc,col sep=comma] {data/small_tern_binrel.csv};
    \addplot[Red] table[x=epoch,y=test_acc,col sep=comma] {data/small_tern_parq.csv};

    \nextgroupplot[title={2-bit},xmin=14,yticklabels=\empty]
    \addplot[blue] table[x=epoch,y=test_acc,col sep=comma] {data/small_2bit_hard.csv};
    \addplot[Teal] table[x=epoch,y=test_acc,col sep=comma] {data/small_2bit_binrel.csv};
    \addplot[Red] table[x=epoch,y=test_acc,col sep=comma] {data/small_2bit_parq.csv};

    \nextgroupplot[xmin=14,ylabel={train loss},legend=\empty]
    \addplot[blue] table[x=epoch,y=train_loss,col sep=comma] {data/small_1bit_hard.csv};
    \addplot[Teal] table[x=epoch,y=train_loss,col sep=comma] {data/small_1bit_binrel.csv};
    \addplot[Red] table[x=epoch,y=train_loss,col sep=comma] {data/small_1bit_parq.csv};
    
    \nextgroupplot[xmin=14,yticklabels=\empty,legend to name={curve-legend}, legend style={legend columns=3}]
    \addplot[blue] table[x=epoch,y=train_loss,col sep=comma] {data/small_tern_hard.csv};
    \addplot[Teal] table[x=epoch,y=train_loss,col sep=comma] {data/small_tern_binrel.csv};
    \addplot[Red] table[x=epoch,y=train_loss,col sep=comma] {data/small_tern_parq.csv};
    \legend{STE,BinRel,PARQ}

    \nextgroupplot[yticklabels=\empty]
    \addplot[blue] table[x=epoch,y=train_loss,col sep=comma] {data/small_2bit_hard.csv};
    \addplot[Teal] table[x=epoch,y=train_loss,col sep=comma] {data/small_2bit_binrel.csv};
    \addplot[Red] table[x=epoch,y=train_loss,col sep=comma] {data/small_2bit_parq.csv};
\end{groupplot}
\path (group c2r2.north east) -- node[yshift=0.5em]{\pgfplotslegendfromname{curve-legend}} (group c2r2.north west);
\end{tikzpicture}
\caption{DeiT-S test accuracy (top row) and train loss (bottom row) across several bit-widths (columns).}
\label{fig:deit-all-results}
\end{figure}

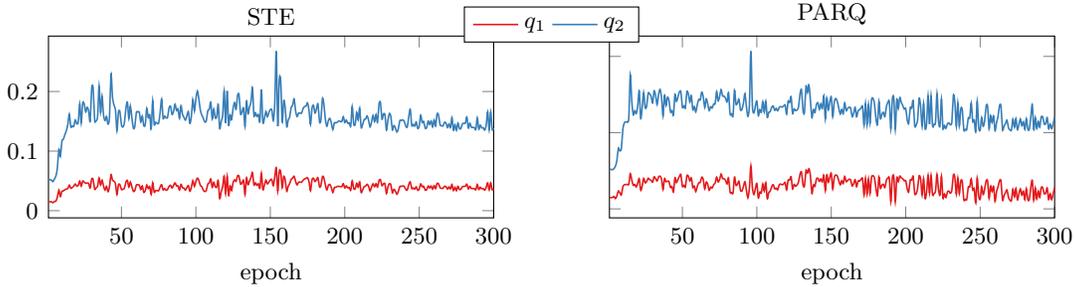
\begin{figure}[ht]
\centering
\begin{tikzpicture}[font=\footnotesize]
\begin{groupplot}[
    group style={
        group size=2 by 1,
        yticklabels at=edge left,
        horizontal sep=4em,
    },
    every axis plot/.append style={semithick,smooth},
    every axis title/.append style={yshift=-0.5em},
    width=7.5cm,height=4cm,
    enlarge x limits=false,
    xlabel={epoch},
]
    \nextgroupplot[title={STE}]
    \addplot+ table[col sep=comma,x=x,y=q3]{data/tiny_2bit_ste_q.csv};
    \addplot+ table[col sep=comma,x=x,y=q4]{data/tiny_2bit_ste_q.csv};

    \nextgroupplot[title={PARQ},legend to name={shared-legend}, legend style={legend columns=4}]
    \addplot+ table[col sep=comma,x=x,y=q3]{data/tiny_2bit_parq_q.csv};
    \addplot+ table[col sep=comma,x=x,y=q4]{data/tiny_2bit_parq_q.csv};
    \legend{$q_1$,$q_2$}
\end{groupplot}
\path (group c1r1.north east) -- node[yshift=0.4em]{\pgfplotslegendfromname{shared-legend}} (group c2r1.north west);
\end{tikzpicture}
\caption{Evolution of $\{q_1,q_2\}$ (estimated by LSBQ) during training of a 2-bit DeiT-Ti model.}
\label{fig:quant-q-evol}
\end{figure}

\section{Additional experiment results}

\label{sec:appendix-experiments}

Figure~\ref{fig:deit-all-results} presents
accuracy and training loss curves for QAT of DeiT-S.
The top left plot 
reveals that PARQ often has stabler training dynamics.  It does not suffer from the sudden accuracy drops seen in STE and BinaryRelax. This could be due to PARQ's more gradual annealing in the first half of training. It performs the most consistently on DeiT-S, suggesting the relative performance of QAT methods may vary by model size.

Figure~\ref{fig:quant-q-evol} shows the evolution of $\{q_1,q_2\}$ (estimated by LSBQ) during training of a 2-bit DeiT-Ti model. They are from the same layer as the one used in Figure~\ref{fig:quant-map-deit} and with the same weight initialization.
It is clear that magnitudes of $\mathcal{Q}$ start small from randomly initialized weights, expand rapidly in early stages of training, then slowly contract in later epochs.

\newpage

\bibliographystyle{icml2025}
\bibliography{parq}

\end{document}